\newtheorem{theorem}{Theorem}
\newtheorem{proposition}{Proposition}
\newtheorem{definition}{Definition}
\theoremstyle{plain}
\def\RR{{\mathbb R}}    
\def\PP{{\mathbb P}}     
\def\EE{{\mathbb E}}    
\def\11{{\mathbf 1}}    
  \def\cG{{\mathcal G}}     \def\cH{{\mathcal H}}  \def\cT{{\mathcal T}}      \def\cD{{\mathcal D}}   \def\cP{{\mathcal P}}           \def\cX{{\mathcal X}} \def\cY{{\mathcal Y}}  \def\cZ{{\mathcal Z}}
\def\bfx{{\bf x}} \def\bfy{{\bf y}} \def\bfz{{\bf z}}
\def\bft{{\bf t}}
\def\tilmu{{\tilde{\mu}}}
\def\tilf{\tilde{f}}
\def\tily{{\tilde{y}}}
\def\tilbfy{{\bf \tilde{y}}}
\newcommand\blankfootnote[1]{%
  \let\thefootnote\relax\footnotetext{#1}%
  \let\thefootnote\svthefootnote%
}
\let\svfootnote\footnote
\renewcommand\footnote[2][?]{%
  \if\relax#1\relax%
    \blankfootnote{#2}%
  \else%
    \if?#1\svfootnote{#2}\else\svfootnote[#1]{#2}\fi%
  \fi
}
\theoremstyle{plain}
    \providecommand{\customgenericname}{}
    \newcommand{\newcustomtheorem}[2]{%
      \newenvironment{#1}[1]
      {%
       \renewcommand\customgenericname{#2}%
       \renewcommand\theinnercustomgeneric{##1}%
       \innercustomgeneric
      }
      {\endinnercustomgeneric}
    }
\newtheoremstyle{TheoremNum}
    {\topsep}{\topsep}              
    {\itshape}                      
    {}                              
    {\bfseries}                     
    {.}                             
    { }                             
    {\thmname{#1}\thmnote{ \bfseries #3}}
\theoremstyle{TheoremNum}
  \renewcommand{\contentsname}%
    {Supplementary materials for \textsc{BayesIMP: }Uncertainty Quantification for Causal Data Fusion}%
\title{\Large{\textsc{BayesIMP: }Uncertainty Quantification for Causal Data Fusion}}
\author{
    Siu Lun Chau$^*$ \\
    University of Oxford\\
    \And
    Jean-Fran\c cois Ton$^*$ \\
    University of Oxford\\
    \And
    Javier Gonz\'alez \\
    Microsoft Research Cambridge\\
    \And
    Yee Whye Teh \\
    University of Oxford\\
    \And
    Dino Sejdinovic \\
    University of Oxford\\
}
\begin{document}
\maketitle
\begin{abstract}
While causal models are becoming one of the mainstays of machine learning, the problem of uncertainty quantification in causal inference remains challenging. In this paper, we study the causal data fusion problem, where datasets pertaining to multiple causal graphs are combined to estimate the average treatment effect of a target variable. As data arises from multiple sources and can vary in quality and quantity, principled uncertainty quantification becomes essential. To that end, we introduce Bayesian Interventional Mean Processes, a framework which combines ideas from probabilistic integration and kernel mean embeddings to represent interventional distributions in the reproducing kernel Hilbert space, while taking into account the uncertainty within each causal graph. To demonstrate the utility of our uncertainty estimation, we apply our method to the Causal Bayesian Optimisation task and show improvements over state-of-the-art methods. %
\footnote[]{$^*$\text{Denotes equal contribution with alphabetical ordering}}
\end{abstract}

\section{Introduction}\label{intro}

Causal inference has seen a significant surge of research interest
in areas such as healthcare \cite{thompson2019causal}, ecology \cite{courtney2017environmental}, and optimisation \cite{aglietti2020multi}. However, data fusion, the problem of merging information from multiple data sources, has received limited attention in the context of causal modelling, yet presents significant potential benefits for practical situations  \cite{meng2020survey, singh2019kernel}. In this work, we consider a causal data fusion problem where two causal graphs are combined for the purposes of inference of a target variable (see Fig.\ref{fig:medical example}). In particular, our goal is to quantify the uncertainty under such a setup and determine the level of confidence in our treatment effect estimates.

\begin{figure}[!htp]
    \centering
    \includegraphics[width=\textwidth]{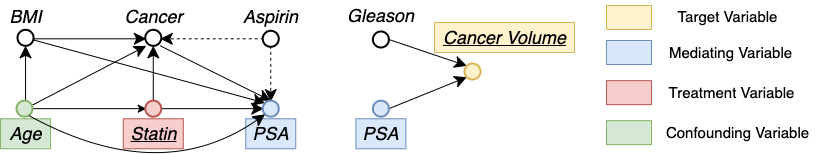}
    \caption{Example problem setup: Causal graphs collected in two separate medical studies i.e. \cite{ferro2015use} and \cite{stamey1989prostate}. (Left) $\cD_1:$ Data describing the causal relationships between statin level and Prostate Specific Antigen (PSA). (Right) $\cD_2:$ Data from a prostate cancer study for patients about to receive a radical prostatectomy. Goal: \textbf{Model} \textbf{$\pmb{\mathbb{E}}[\emph{Cancer Volume}|\emph{do(Statin)}]$} while also quantifying its uncertainty.}
    \label{fig:medical example}
\end{figure}

Let us consider the motivating example in Fig.\ref{fig:medical example}, where a medical practitioner is investigating how \emph{prostate cancer volume} is affected by a \textit{statin} drug dosage. We consider the case where the doctor only has access to two separate medical studies describing the quantities of interest. On one hand we have observational data, from one medical study $\cD_1$ \cite{thompson2019causal}, describing the causal relationship between \textit{statin} level and \textit{prostate specific antigen (PSA)}, and on the other hand we have observational data, from a second study $\cD_2$ \cite{stamey1989prostate}, that looked into the link between \textit{PSA} level and \textit{prostate cancer volume}. The goal is to model the \textbf{interventional effect} between our target variable (\textit{cancer volume}) and the treatment variable (\textit{statin}). This problem setting is different from the standard observational scenario as it comes with the following challenges:

\begin{itemize}
    \item \textbf{Unmatched data:} Our goal is to estimate $\EE[\textit{cancer volume}|do(\textit{statin})]$ but the observed \textit{cancer volume} is not paired with \textit{statin} dosage. Instead, they are related via a mediating variable \textit{PSA}.
    \item \textbf{Uncertainty quantification:} The two studies may be of different data quantity/quality. Furthermore, a covariate shift in the mediating variable, i.e. a difference between its distributions in two datasets, may cause inaccurate extrapolation. Hence, we need to account for uncertainty in both datasets.
\end{itemize}

Formally, let $X$ be the treatment (\emph{Statin}), $Y$ be the mediating variable (\emph{PSA}) and $T$ our target (\emph{cancer volume}), and our aim is to estimate $\EE[T|do(X)]$. 
The problem of unmatched data in a similar context has been previously considered by \cite{singh2019kernel} using a two-staged regression approach ($X\to Y$ and $Y\to T$). However, uncertainty quantification, despite being essential if our estimates of interventional effects will guide decision-making, has not been previously explored. In particular, it is crucial to quantify the  uncertainty in both stages as this takes into account the lack of data in specific parts of the space. Given that we are using different datasets for each stage, there are also two sources of epistemic uncertainties (due to lack of data) as well as two sources of aleatoric uncertainties (due to inherent randomness in $Y$ and $T$) \cite{hullermeier2021aleatoric} . It is thus natural to consider regression models based on Gaussian Processes (GP) \cite{rasmussen2003gaussian}, as they are able to model both types of uncertainties. However, as GPs, or any other standard regression models, are designed to model conditional expectations only and will fail to capture the underlying distributions of interest (e.g.\ if there is multimodality in $Y$ as discussed in \cite{pmlr-v130-ton21a}). This is undesirable since, as we will see, interventional effect estimation requires accurate estimates of distributions. While one could in principle resort to density estimation methods, this becomes challenging since we typically deal with a number of conditional/ interventional densities. 

In this paper, we introduce the framework of \emph{Bayesian Interventional Mean Processes} (\textsc{BayesIMP}) to circumvent the  challenges in the causal data fusion setting described above. 
\textsc{BayesIMP} considers kernel mean embeddings \cite{muandet2017kernel} for representing distributions in a reproducing kernel Hilbert space (RKHS), in which the whole arsenal of kernel methods can be extended to probabilistic inference (e.g.\ kernel Bayes rule \cite{fukumizu2010kernel}, hypothesis testing \cite{zhang2018large}, distribution regression \cite{law2018bayesian}). 
Specifically, \textsc{BayesIMP} uses
kernel mean embeddings 
to represent the interventional distributions and
to analytically marginalise out $Y$, hence accounting for aleatoric uncertainties.
Further, \textsc{BayesIMP}
uses GPs to estimate the required kernel mean embeddings from data in a Bayesian manner,
which allows to quantify the epistemic uncertainties when representing the interventional distributions.
To illustrate the quality of our uncertainty estimates, we apply \textsc{BayesIMP} to Causal Bayesian Optimisation \cite{aglietti2020causal}, an efficient heuristic to optimise objective functions of the form  $x^* = \arg \min_{x\in\mathcal{X}} \EE[T|do(X)=x]$.  Our contributions are summarised below:
\begin{enumerate}
    \item We propose a novel \textit{Bayesian Learning of Conditional Mean Embedding} (\textsc{BayesCME}) that allows us to estimate conditional mean embeddings in a Bayesian framework.
    \item Using \textsc{BayesCME}, we propose a novel \emph{Bayesian Interventional Mean Process} (\textsc{BayesIMP}) that allows us to model interventional effect across causal graphs without explicit density estimation, while obtaining uncertainty estimates for $\EE[T|do(X)=x]$.
    \item We apply \textsc{BayesIMP} to Causal Bayesian Optimisation, a problem introduced in \cite{aglietti2020causal} and show significant improvements over existing state-of-the-art methods.
\end{enumerate}

Note that \cite{bareinboim2016causal} also considered a causal fusion problem but with a different objective. They focused on extrapolating experimental findings across treatment domains, i.e.\ inferring $\EE[Y|do(X)]$ when only data from $p(Y|do(S))$ is observed, where $S$ is some other treatment variable. In contrast, we focus on modelling combined causal graphs, with a strong emphasis on uncertainty quantification.
While \cite{singh2020kernel} considered mapping interventional distributions in the RKHS to model quantities such as $\EE[T|do(X)]$, they only considered a frequentist approach, which does not account for epistemic uncertainties.





\newpage
\paragraph{Notations.}
\begin{wrapfigure}{r}{0.35\textwidth}
    \centering
    \includegraphics[width=0.35\textwidth]{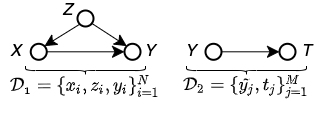}
    \caption{A general two stage causal learning setup. }
    \label{fig: setup_notation}
    \vspace{-0.7cm}
\end{wrapfigure}
We denote $X, Y, Z$ as random variables taking values in the non-empty sets $\cX, \cY$ and $\cZ$ respectively. Let $k_x: \cX \times \cX \rightarrow \RR$ be positive definite kernels on $X$ with an associated RKHS $\cH_{k_x}$. The corresponding canonical feature map $k_x(x', \cdot)$ is denoted as $\phi_x(x')$. Analogously for $Y$ and $Z$.

In the simplest setting, we observe i.i.d samples $\cD_1 = \{x_i, y_i, z_i\}_{i=1}^N$ from joint distribution $\PP_{XYZ}$ which we concatenate into vectors $\bfx := [x_1, ..., x_N]^\top$. Similarly for $\bfy$ and $\bfz$. For this work, $X$ is referred as \textit{treatment variable}, $Y$ as \textit{mediating variable} and $Z$ as \emph{adjustment variables} accounting for confounding effects. With an abuse of notation, features matrices are defined by stacking feature maps along the columns, i.e $\Phi_{\bfx} := [\phi_x(x_1), ..., \phi_x(x_N)]$. We denote the Gram matrix as $K_{\bfx\bfx} := \Phi_{\bfx}^\top \Phi_{\bfx}$ and the vector of evaluations $k_{x\bfx}$ as $[k_x(x, x_1), ..., k_x(x, x_N)]$. We define $\Phi_{\bfy}, \Phi_{\bfz}$ analogously for $\bfy$ and $\bfz$. 


Lastly, we denote $T = f(Y) + \epsilon$ as our \emph{target variable}, which is modelled as some noisy evaluation of a function $f: \cY \rightarrow \cT$ on $Y$ while $\epsilon$ being some random noise. For our problem setup we observe a second dataset of i.i.d realisations $\cD_2 = \{\tilde{y}_j, t_j\}_{j=1}^M$ from the joint $\PP_{YT}$ independent of $\cD_1$. Again, we define $\tilbfy := [\tily_1, ..., \tily_M]^\top$ and $\bft := [t_1,..., t_M]^\top$ just like for $\cD_1$. See Fig.\ref{fig: setup_notation} for illustration. 

\section{Background} \label{sec: background}

Representing interventional distributions in an RKHS has been explored in different contexts  \cite{muandet2018counterfactual, singh2020kernel, mitrovic2018causal}. In particular, when the treatment is continuous, \cite{singh2020kernel} introduced the \emph{Interventional Mean Embeddings} (\textsc{IME}s) to model densities in an RKHS by utilising smoothness across treatments. Given that \textsc{IME} is an important building block to our contribution, we give it a detailed review by first introducing the key concepts of \textit{do}-calculus \citep{pearl1995causal} and conditional mean embeddings \citep{song2013kernel}.

\subsection{Interventional distribution and \textit{do}-calculus}
In this work, we consider the structural causal model \cite{pearl1995causal} (SCM) framework, where a causal directed acyclic graph (DAG) $\cG$ is given and encodes knowledge of existing causal mechanisms amongst the variables in terms of conditional independencies. Given random variables $X$ and $Y$, a central question in interventional inference \cite{pearl1995causal} is to estimate the distribution $p(Y|do(X)=x)$, where $\{do(X)=x\}$ represents an intervention on $X$ whose value is set to $x$.  
Note that this quantity is not directly observed given that we are usually only given observational data, i.e, data sampled from the conditional $p(Y|X)$ but not from the interventional density $p(Y|do(X))$. However, Pearl \cite{pearl1995causal} developed \textit{do}-calculus which allows us to estimate interventional distributions from purely observational distributions under the identifiability assumption. Here we present the backdoor and front-door adjustments, which are the fundamental components of DAG based causal inference.

\label{sec:causal}


\begin{wrapfigure}{r}{0.25\textwidth}
    \centering
    \includegraphics[width=0.25\textwidth]{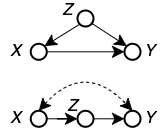}
    \caption{(Top) Backdoor adjustment (Bottom) Front-door adjustment, dashed edges denote unobserved confounders. }
    \label{adjusment_illus}
    \vspace*{-0.5cm}
\end{wrapfigure}
The backdoor adjustment is applicable when there are observed confounding variables $Z$ between the cause $X$ and the effect $Y$ (see Fig. \ref{adjusment_illus} (Top)). In order to correct for this confounding bias we can use the following equation, adjusting for $Z$ as $p(Y|do(X)=x) = \int_{\cZ} p(Y|X=x, z) p(z) dz$.

The front-door adjustment applies to cases when confounders are unobserved (see Fig. \ref{adjusment_illus} (Bottom)). Given a set of front-door adjustment variables $Z$, we can again correct the estimate for the causal effect from $X$ to $Y$ with $p(Y|do(X)=x) = \int_{\cZ}\int_{\cX} p(Y|x', z)p(z|X=x)p(x') dx' dz$.

We rewrite the above formulae in a more general form as we show below. For the remainder of the paper we will opt for this notation:
\begin{align}
    p(Y|do(X)=x) = \EE_{\Omega_x}[p(Y|\Omega_x)] = \int
    p(Y|\Omega_x) p(\Omega_x) d\Omega_x
\end{align}
For backdoor we have $\Omega_x = \{X=x, Z\}$ and $p(\Omega_x) = \delta_x p(Z)$ where $\delta_x$ is the Dirac measure at $X=x$. For front-door, $\Omega_x =\{X', Z\}$ and $p(\Omega_x) = p(X')p(Z|X=x)$.

\subsection{Conditional Mean Embeddings}
Kernel mean embeddings of distributions provide a powerful framework for representing probability distributions \cite{muandet2017kernel, song2013kernel} in an RKHS. In particular, we work with conditional mean embeddings (\textsc{CME}s) in this paper. Given random variables $X, Y$ with joint distribution $\PP_{XY}$, the conditional mean embedding with respect to the conditional density $p(Y|X=x)$, is defined as:
\begin{align}
    \mu_{Y|X=x}:=\EE_{Y|X=x}[\phi_{y}(Y)] = \int_{\cY}\phi_y(y) p(y|X=x)dy\label{eq: CME1}
\end{align}
\textsc{CME}s allow us to represent the distribution $p(Y|X=x)$ as an element $\mu_{Y|X=x}$ in the RKHS $\cH_{k_y}$ without having to model the densities explicitly. 
Following \cite{song2013kernel}, \textsc{CMEs} can be associated with a Hilbert-Schmidt operator $\mathcal{C}_{Y|X}:\mathcal H_{k_x} \to \mathcal H_{k_y}$, known as the conditional mean embedding operator, which satisfies $\mu_{Y|X=x} = \mathcal{C}_{Y|X} \phi_x(x)$
where $\mathcal{C}_{Y|X}:= \mathcal{C}_{YX} \mathcal{C}_{XX}^{-1}$
with $\mathcal{C}_{YX}:=\mathbb{E}_{Y,X}[\phi_y(Y) \otimes \phi_x(X)]$ and $\mathcal{C}_{XX}:=\mathbb{E}_{X,X}[\phi_x(X) \otimes \phi_x(X)]$ being the covariance operators.
As a result, the finite sample estimator of $\mathcal{C}_{Y|X}$ based on the dataset $\{\bfx, \bfy\}$ can be written as:
\begin{equation}
\hat{\mathcal{C}}_{Y|X} = \Phi_\bfy (K_{\bfx\bfx} + \lambda I)^{-1} \Phi_\bfx^T
\label{CMO}
\end{equation}
where $\lambda>0$ is a regularization parameter. Note that from Eq.\ref{CMO}, \cite{grunewalder2012conditional} showed that the \textsc{CME} can be interpret as a vector-valued kernel ridge regressor (V-KRR) i.e. $\phi_x(x)$ is regressed to an element in $\mathcal{H}_{k_y}$. This is crucial as CMEs allow us to turn the integration, in Eq.\ref{eq: CME1}, into a regression task and hence remove the need for explicit density estimation. This insight is important as it allows us to derive analytic forms for our algorithms. Furthermore, the regression formalism of \textsc{CME}s motivated us to derive a Bayesian version of \textsc{CME} using vector-valued Gaussian Processes (V-GP), see Sec.\ref{sec: method}.
\subsection{Interventional Mean Embeddings}
\textit{Interventional Mean Embeddings} (\textsc{IME}) \citep{singh2020kernel} combine the above ideas to represent interventional distributions in RKHSs. We derive the front-door adjustment embedding here but the backdoor adjustment follows analogously. Denote $\mu_{Y|do(X)=x}$ as the \textsc{IME} corresponding to the interventional distribution $p(Y|do(X)=x)$, which can be written as:
\begin{align*}
    \mu_{Y|do(X)=x} &:= \int_{\cY} \phi_y(y)p(y|do(X)=x) dy = \int_{\cX} \int_{\cZ} \underbrace{\Big(\int_{\cY} \phi_y(y) p(y|x', z)dy\Big)}_{\text{\textsc{CME} } \mu_{Y|X=x,Z=z}} p(z|x)p(x')  dz dx' \\
\intertext{using the front-door formula with adjustment variable $Z$, and rearranging the integrals. By definition of \textsc{CME} $\int \phi_y(y)p(y|x', z)dy = C_{Y|X, Z} (\phi_x(x') \otimes  \phi_z(z))$ and linearity of integration, we have}
 &= C_{Y|X,Z} \Big(\underbrace{\int_\cX\phi_x(x')p(x') dx'}_{=\mu_X} \otimes \underbrace{\int_\cZ \phi_z(z) p(z|x) dz}_{=\mu_{Z|X=x}}\Big) = C_{Y|X, Z} \big(\mu_{X} \otimes \mu_{Z|X=x}\big)
\vspace*{-0.3cm}
\end{align*}
Using notations from Sec.\ref{sec:causal}, embedding interventional distributions into an RKHS is as follows. 
\begin{proposition}
\label{proposition: causal_embedding}
Given an identifiable do-density of the form $p(Y|do(X)=x) = \mathbb{E}_{\Omega_x}[p(Y|\Omega_x)]$,  the general form of the empirical interventional mean embedding is given by,
\begin{align}
    \hat{\mu}_{Y|do(X)=x} = \Phi_Y(K_{\Omega_x} + \lambda I)^{-1}\Phi_{\Omega_x}(x)^\top
\end{align}
where $K_{\Omega_x}=K_{XX} \odot K_{ZZ}$ and $\Phi_{\Omega_x}(x)$ is derived depending on $p(\Omega_x)$. In particular, for backdoor adjustments, $\Phi_{\Omega_x}^{(bd)}(x) = \Phi_X^\top k_X(x, \cdot) \odot \Phi_Z^\top \hat{\mu}_{z}$ and for front-door $\Phi_{\Omega_x}^{(fd)}(x) = \Phi_X^\top \hat{\mu}_X \odot \Phi_Z^\top \hat{\mu}_{Z|X=x}$.
\end{proposition}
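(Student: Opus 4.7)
The plan is to replace each population-level object in the expression $\mu_{Y|do(X)=x}=C_{Y|\Omega}\mu_{\Omega_x}$ (derived in the display preceding the proposition via \emph{do}-calculus and linearity of integration) by its empirical counterpart, carefully exploiting the tensor-product structure of the kernel on the joint space $\cX\times\cZ$.

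First, I would unify the two adjustments by viewing $\Omega_x$ as a random variable in $\cX\times\cZ$ equipped with the tensor-product kernel $k_\Omega((x,z),(x',z'))=k_x(x,x')k_z(z,z')$, whose canonical feature map factorises as $\phi_\Omega(x,z)=\phi_x(x)\otimes\phi_z(z)$. By the defining property of the CME and Fubini, $\mu_{Y|do(X)=x}=\int C_{Y|\Omega}\phi_\Omega(\omega)p(\Omega_x=\omega)\,d\omega=C_{Y|\Omega}\mu_{\Omega_x}$. Plugging in $p(\Omega_x)=\delta_x\,p(Z)$ collapses the $\cX$-integral to a point evaluation and gives $\mu_{\Omega_x}=\phi_x(x)\otimes\mu_Z$ (backdoor); plugging in $p(\Omega_x)=p(X')p(Z|X=x)$ gives $\mu_{\Omega_x}=\mu_X\otimes\mu_{Z|X=x}$ (front-door), reproducing the front-door case already sketched in the text.

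Second, I apply Eq.~\ref{CMO} to the product space. Since $\langle\phi_x(x_i)\otimes\phi_z(z_i),\phi_x(x_j)\otimes\phi_z(z_j)\rangle=k_x(x_i,x_j)k_z(z_i,z_j)$, the joint Gram matrix of $\{(x_i,z_i)\}_{i=1}^N$ is exactly the Hadamard product $K_\Omega=K_{XX}\odot K_{ZZ}$, yielding $\hat C_{Y|\Omega}=\Phi_Y(K_\Omega+\lambda I)^{-1}\Phi_\Omega^\top$ where $\Phi_\Omega$ stacks $\phi_x(x_i)\otimes\phi_z(z_i)$. Substituting the empirical estimates $\hat\mu_X,\hat\mu_Z,\hat\mu_{Z|X=x}$ for the population ones produces the plug-in estimator $\hat\mu_{Y|do(X)=x}=\Phi_Y(K_\Omega+\lambda I)^{-1}\Phi_\Omega^\top\hat\mu_{\Omega_x}$, so the remaining work is to identify $\Phi_\Omega^\top\hat\mu_{\Omega_x}$ with the claimed $\Phi_{\Omega_x}(x)$.

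The final step is the decoupling identity for tensor inner products: the $i$-th entry of $\Phi_\Omega^\top(a\otimes b)$ equals $\langle\phi_x(x_i),a\rangle\langle\phi_z(z_i),b\rangle$, which is the $i$-th entry of $(\Phi_X^\top a)\odot(\Phi_Z^\top b)$. Setting $(a,b)=(\phi_x(x),\hat\mu_Z)$ recovers $\Phi_X^\top k_x(x,\cdot)\odot\Phi_Z^\top\hat\mu_Z$ for backdoor, and $(a,b)=(\hat\mu_X,\hat\mu_{Z|X=x})$ recovers $\Phi_X^\top\hat\mu_X\odot\Phi_Z^\top\hat\mu_{Z|X=x}$ for front-door, matching the two explicit forms stated in the proposition. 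I expect the only real obstacle to be notational bookkeeping—keeping tensor slots, sample indices, transposes and Hadamard products consistent so that the pieces assemble into a single column vector acting on $(K_\Omega+\lambda I)^{-1}$—since each individual step is a direct consequence of definitions already laid out in Sections~\ref{sec:causal} and the \textsc{CME} subsection.
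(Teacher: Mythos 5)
Your proposal is correct and takes essentially the same route as the paper's own derivation (the display immediately preceding the proposition): apply the adjustment formula, exchange integrals, invoke the CME definition and linearity to obtain $\mu_{Y|do(X)=x}=C_{Y|X,Z}\,\mu_{\Omega_x}$ with $\mu_{\Omega_x}$ a tensor product of (conditional) mean embeddings, and then substitute the empirical CME operator on the product space. Your explicit observations that the joint Gram matrix is the Hadamard product $K_{XX}\odot K_{ZZ}$ and that $\Phi_\Omega^\top(a\otimes b)=(\Phi_X^\top a)\odot(\Phi_Z^\top b)$ simply make precise the bookkeeping the paper leaves implicit, and correctly recover both the backdoor and front-door forms of $\Phi_{\Omega_x}(x)$.
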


\vspace{-0.2cm}
\section{Our Proposed Method}\label{sec: method}
\vspace{-0.2cm}

\begin{figure}[!htp]
\begin{minipage}{\textwidth}
\begin{minipage}[htp!]{0.45\textwidth}
\centering
\includegraphics[width=0.93\textwidth]{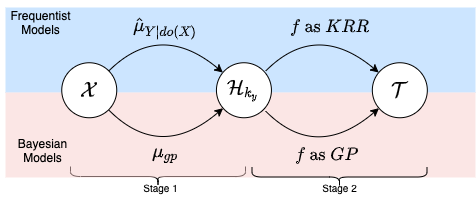}
\captionof{figure}{Two-staged causal learning problem}\label{fig: methods}
\end{minipage}
\hfill
\begin{minipage}[htp!]{0.45\textwidth}
\centering
\begin{tabular}{lcc}
\toprule
METHODS                    &  Stage 1   & Stage 2 \\ 
\midrule
\textsc{IME} \cite{singh2020kernel}     & \textsc{KRR} & \textsc{KRR} \\
IMP (Ours) & \textsc{KRR} & \textsc{GP} \\
\textsc{BayesIME} (Ours)           & \textsc{GP}  & \textsc{KRR} \\
\textsc{BayesIMP} (Ours)            & \textsc{GP}  & \textsc{GP}  \\
\bottomrule
\end{tabular}
\captionof{table}{Summary of our proposed methods}\label{table: methods}
\end{minipage}
\end{minipage}
\vspace*{-0.3cm}
\end{figure}

\paragraph{Two-staged Causal Learning.} Given two independent datasets $\cD_1=\{(x_i, z_i, y_i)\}_{i=1}^N$ and $\cD_2=\{(\tilde{y}_j, t_j)\}_{j=1}^M$, our goal is to model the average treatment effect in $T$ when intervening on variable $X$, i.e model $g(x) = \EE[T|do(X)=x]$. Note that the target variable $T$ and the treatment variable $X$ are never jointly observed. Rather, they are linked via a mediating variable $Y$ observed in both datasets. In our problem setting, we make the following two assumptions: \textbf{(A1)} The treatment only affects the target through the mediating variable, i.e $T \!\perp\!\!\!\perp do(X)| Y$ and \textbf{(A2) }Function $f$ given by $f(y)=\EE[T|Y=y]$ belongs to an RKHS $\cH_{k_y}$.


We can thus express the average treatment effect as:
\begin{align}
    g(x)= \EE[T|do(X)=x] &= \int_\mathcal{Y} \underbrace{\EE[T|do(X)=x, Y=y]}_{=\EE[T|Y=y], \text{ since } T \!\perp\!\!\!\perp do(X)| Y} p(y|do(X)=x) dy\\
                   &= \int_\mathcal{Y} f(y) p(y|do(X)=x) dy = \langle f, \mu_{Y|do(X)=x}\rangle_{\cH_{k_y}}.\label{eq: integral}
\end{align}

The final expression decomposes the problem of estimating $g$ into that of estimating the \textsc{IME} $\mu_{Y|do(X)}$ (which can be done using $\cD_1$) and that of estimating the integrand $f:\cY \rightarrow \cT$ (which can be done using $\cD_2$). Each of these two components can either be estimated using a GP or KRR approach (See Table \ref{table: methods}). Furthermore, the reformulation as an RKHS inner product is crucial, as it circumvents the need for density estimation as well as the need for subsequent integration in Eq.\ref{eq: integral}. Rather, the main parts of the task can now be viewed as two instances of regression (recall that mean embeddings can be viewed as vector-valued regression).

To model $g$ and quantify its uncertainty, we propose 3 \textsc{GP}-based approaches. While the first 2 methods, \textit{Interventional Mean Process} (\textsc{IMP}) and \textit{Bayesian Interventional Mean Embedding} (\textsc{BayesIME)} are novel derivations that allow us to quantify uncertainty from either one of the datasets, we treat them as intermediate yet necessary steps to derive our main algorithm, \textit{Bayesian Interventional Mean Process} (\textsc{BayesIMP)}, which allows us to quantify uncertainty from both sources in a principled way. For a summary of the methods, see Fig.\ref{fig: methods} and Table \ref{table: methods}. All derivations are included in the appendix.


\textbf{\emph{Interventional Mean Process}:} Firstly, we train $f$ as a GP using $\cD_2$ and model $\mu_{Y|do(X)=x}$ as \textsc{V-KRR} using $\cD_1$. By drawing parallels to Bayesian quadrature \cite{briol2019probabilistic} and conditional mean process introduced in \cite{chau2021adownscaling}, the integral of interest $g(x) = \int f(y)p(y|do(X)=x)dy$ will be a GP indexed by the treatment variable $X$. We can then use the empirical embedding $\hat{\mu}_{Y|do(X)}$ learnt in $\cD_1$ to obtain an analytic mean and covariance of $g$.

\textbf{\textit{Bayesian Interventional Mean Embedding}}: Next, to account for the uncertainty from $\cD_1$, we model $f$ as a KRR and $\mu_{Y|do(X)=x}$ using a V-\textsc{GP}. We introduce our novel \emph{Bayesian Learning of Conditional Mean Embeddings} (\textsc{BayesCME}), which uses a \emph{nuclear dominant kernel}~\cite{lukic2001stochastic} construction, similar to \cite{flaxman2016bayesian}, to ensure that the inner product $\langle f, \mu_{Y|do(X)=x} \rangle$ is well-defined. As the embedding is a \textsc{GP}, the resulting inner product is also a \textsc{GP} and hence takes into account the uncertainty in $\cD_1$.(See Prop. \ref{proposition: f is krr cme is gp}).

     
     
\textbf{\textit{Bayesian Interventional Mean Process}}: 
Lastly, in order to account for uncertainties coming from both $\cD_1$ and $\cD_2$, we combine ideas from the above $\textsc{IMP}$ and $\textsc{BayesIME}$. We place GPs on both $f$ and $\mu_{Y|do(X)}$ and use their inner product to model $\EE[T|do(X)]$. Interestingly, the resulting uncertainty can be interpreted as the sum of uncertainties coming from \textsc{IMP} and \textsc{BayesIME} with an additional interaction term (See Prop.\ref{proposition: prop: BayesIMP}).


\subsection{Interventional Mean Process}

Firstly, we consider the case where $f$ is modelled using a \textsc{GP} and $\mu_{Y|do(X)=x}$ using a \textsc{V-KRR}. This allows us to take into account the uncertainty from $\cD_2$ by modelling the relationship between $Y$ and $T$ using in a \textsc{GP}. Drawing parallels to Bayesian quadrature \cite{briol2019probabilistic} where integrating $f$ with respect to a marginal measure results into a Gaussian random variable, we integrate $f$ with respect to a conditional measure, thus resulting in a \textsc{GP} indexed by the conditioning variable. Note that \cite{chau2021adownscaling} studied this GP in a non-causal setting, for a very specific downscaling problem. In this work, we extend their approach to model uncertainty in the causal setting. The resulting mean and covariance are then estimated analytically, i.e without integrals, using the empirical \textsc{IME} $\hat\mu_{Y|do(X)}$ learnt from $\cD_1$, see Prop.\ref{proposition: CausalCMP}.

\begin{proposition}[\textbf{\textsc{IMP}}] \label{proposition: CausalCMP}
Given dataset $D_1=\{(x_i, y_i, z_i)\}_{i=1}^N$ and $D_2=\{(\tilde{y}_j, t_j)\}_{j=1}^M$, if $f$ is the posterior \textsc{GP} learnt from $\cD_2$, then $g = \int f(y)p(y|do(X))dy$ is a \textsc{GP} $\cG\cP(m_1, \kappa_1)$ defined on the treatment variable $X$ with the following mean and covariance estimated using $\hat{\mu}_{Y|do(X)}$ ,
\begin{align}
\small
    m_1(x) &= \langle \hat{\mu}_{Y|do(x)}, m_f \rangle_{\cH_{k_y}} = \Phi_{\Omega_x}(x)^\top(K_{\Omega_x} + \lambda I)^{-1} K_{\bfy\tilde{\bfy}}(K_{\tilde{\bfy}\tilde{\bfy}} + \lambda_f I)^{-1}\bft\\
    \kappa_1(x, x') &= \hat{\mu}_{Y|do(x)}^\top \hat{\mu}_{Y|do(x')} - \hat{\mu}_{Y|do(x)}^\top\Phi_{\tilde{\bfy}}(K_{\tilde{\bfy}\tilde{\bfy}} + \lambda I)^{-1}\Phi_{\tilde{\bfy}}^\top  \hat{\mu}_{Y|do(x')} \\
    &= \Phi_{\Omega_x}(x)^\top (K_{\Omega_x} + \lambda I)^{-1} \tilde{K}_{\bfy\bfy} (K_{\Omega_x} + \lambda I)^{-1} \Phi_{\Omega_x}(x')
\end{align}
where $\hat{\mu}_{Y|do(x)} = \hat{\mu}_{Y|do(X)=x}, K_{\tilde{\bfy} \bfy} = \Phi_{\tilde{\bfy}}^\top \Phi_{\bfy}$, $m_f$ and $\tilde{K}_{\bfy\bfy}$ are the posterior mean function and covariance of $f$ evaluated at $\bfy$ respectively. $\lambda > 0$ is the regularisation of the \textsc{CME}. $\lambda_f > 0$ is the noise term for GP $f$. $\Omega_x$ is the set of variables as specified in Prop.\ref{proposition: causal_embedding}. 
\end{proposition}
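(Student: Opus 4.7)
The proof hinges on the fact that, for each fixed $x$, the quantity $g(x) = \int f(y)\,p(y|do(X)=x)\,dy$ is a linear (integral) functional of the GP $f$, so that the collection $\{g(x)\}_{x\in\cX}$ inherits joint Gaussianity and is itself a GP. This is the Bayesian quadrature construction of \cite{briol2019probabilistic}, extended to conditional measures in \cite{chau2021adownscaling}; the only novelty here is that the conditioning measure is an interventional one, whose mean embedding is supplied in closed form by Prop.\ref{proposition: causal_embedding}.

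For the mean, interchanging expectation and integration gives
\begin{align*}
m_1(x) \;=\; \EE[g(x)] \;=\; \int m_f(y)\, p(y|do(X)=x)\, dy \;=\; \langle m_f,\, \mu_{Y|do(X)=x}\rangle_{\cH_{k_y}},
\end{align*}
where the last equality uses the reproducing property, valid since $m_f = k_{\cdot\tilde{\bfy}}(K_{\tilde{\bfy}\tilde{\bfy}}+\lambda_f I)^{-1}\bft$ lies in $\cH_{k_y}$. Substituting the empirical IME $\hat{\mu}_{Y|do(x)} = \Phi_\bfy (K_{\Omega_x}+\lambda I)^{-1}\Phi_{\Omega_x}(x)$ from Prop.\ref{proposition: causal_embedding}, and using $\Phi_\bfy^\top m_f = [m_f(y_i)]_{i=1}^N = K_{\bfy\tilde{\bfy}}(K_{\tilde{\bfy}\tilde{\bfy}}+\lambda_f I)^{-1}\bft$, yields the claimed expression for $m_1(x)$.

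For the covariance, Fubini and bilinearity of covariance give
\begin{align*}
\kappa_1(x,x') \;=\; \mathrm{Cov}\bigl(g(x), g(x')\bigr) \;=\; \iint \tilde{k}(y,y')\, p(y|do(x))\, p(y'|do(x'))\, dy\, dy'.
\end{align*}
Inserting the posterior GP covariance $\tilde{k}(y,y') = k_y(y,y') - k_{y\tilde{\bfy}}(K_{\tilde{\bfy}\tilde{\bfy}}+\lambda_f I)^{-1}k_{\tilde{\bfy} y'}$ and integrating each term against the two interventional measures produces the first form of $\kappa_1$. The second closed form then follows by substituting the empirical IME on both sides and collecting the middle factor $\Phi_\bfy^\top\bigl[\mathrm{Id} - \Phi_{\tilde{\bfy}}(K_{\tilde{\bfy}\tilde{\bfy}}+\lambda_f I)^{-1}\Phi_{\tilde{\bfy}}^\top\bigr]\Phi_\bfy = K_{\bfy\bfy} - K_{\bfy\tilde{\bfy}}(K_{\tilde{\bfy}\tilde{\bfy}}+\lambda_f I)^{-1}K_{\tilde{\bfy}\bfy} = \tilde{K}_{\bfy\bfy}$, using $\Phi_\bfy^\top\Phi_\bfy = K_{\bfy\bfy}$ and $\Phi_\bfy^\top\Phi_{\tilde{\bfy}} = K_{\bfy\tilde{\bfy}}$.

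The main technical obstacle is the measure-theoretic justification that $\int f(y)\,p(y|do(x))\,dy$ is almost surely well defined and a measurable function of $x$, since GP sample paths generally fall outside the RKHS $\cH_{k_y}$ and the exchanges above require integrability. Standard Bayesian quadrature arguments \cite{briol2019probabilistic} handle this under mild conditions (e.g., $\int \sqrt{k_y(y,y)}\, p(y|do(x))\, dy < \infty$, which also ensures $\mu_{Y|do(x)} \in \cH_{k_y}$ so that the inner-product representation of $m_1$ is legitimate). Once that step is granted, the rest of the argument is elementary linear algebra.
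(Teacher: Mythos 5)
Your proposal is correct and follows essentially the same route as the paper's proof: Gaussianity of $g$ via the linearity of the integral functional (Bayesian quadrature), Fubini to pull the expectation and covariance through the interventional measure, and substitution of the empirical IME to obtain the closed forms. Your explicit extraction of the middle factor $\tilde{K}_{\bfy\bfy}$ and your remark on the measure-theoretic conditions are slightly more detailed than the paper, which simply defers to the ``mild conditions'' of \cite{chau2021adownscaling}, but the argument is the same.
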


\textbf{Summary:} The posterior covariance between $x$ and $x'$ in \textsc{IMP} can be interpreted as the similarity between their corresponding empirical \textsc{IMEs} $\hat{\mu}_{Y|do(X)=x}$ and $\hat{\mu}_{Y|do(X)=x'}$ weighted by the posterior covariance $\tilde{K}_{\bfy\bfy}$, where the latter corresponds to the uncertainty when modelling $f$ as a GP in $\cD_2$. However, since $f$ only considers uncertainty in $\cD_2$, we need to develop a method that allows us to quantify uncertainty when learning the \textsc{IME} from $\cD_1$. In the next section, we introduce a Bayesian version of \textsc{CME}, which then lead to $\textsc{BayesIME}$, a remedy to this problem.

\subsection{Bayesian Interventional Mean Embedding}\label{sec:BLCME-KRR}

To account for the uncertainty in $\cD_1$ when estimating $\mu_{Y|do(X)}$, we consider a \textsc{GP} model for \textsc{CME}, and later extend to the interventional embedding \textsc{IME}. We note that Bayesian formulation of CMEs has also been considered in \cite{hsu2018hyperparameter}, but with a specific focus on discrete target spaces.

\textbf{Bayesian learning of conditional mean embeddings with V-GP.} As mentioned in Sec.\ref{sec: background}, CMEs have a clear "feature-to-feature" regression perspective, i.e $\EE[\phi_y(Y)|X=x]$ is the result of regressing $\phi_y(Y)$ onto $\phi_x(X)$. Hence, we consider a vector-valued GP construction to estimate the \textsc{CME}. 

Let $\mu_{gp}(x, y)$ be a \textsc{GP} that models $\mu_{Y|X=x}(y)$. Given that $f \in \mathcal{H}_{k_y}$, for $\langle f, \mu_{gp}(x, \cdot)\rangle_{\cH_{k_y}}$ to be well defined, we need to ensure $\mu_{gp}(x, \cdot)$ is also restricted to $\cH_{k_y}$ for any fixed $x$. Consequently, we cannot define a $\cG\cP(0, k_x\otimes k_y)$ prior on $\mu_{gp}$ as usual, as draws from such prior will almost surely fall outside $\cH_{k_x} \otimes \cH_{k_y}$ \cite{lukic2001stochastic}. Instead we define a prior over $\mu_{gp} \sim \cG\cP(0, k_x \otimes r_y)$, where $r_y$ is a \textit{nuclear dominant kernel}~\cite{lukic2001stochastic} over $k_y$, which ensures that samples paths of $\mu_{gp}$ live in $\cH_{k_x} \otimes \cH_{k_y}$ almost surely. In particular, we follow a similar construction as in \cite{flaxman2016bayesian} and model $r_y$ as $r_y(y_i, y_j) = \int k_y(y_i, u)k_y(u, y_j) \nu(du)$ where $\nu$ is some finite measure on $Y$. Hence we can now setup a vector-valued regression in $\cH_{k_y}$ as follows:
\begin{equation}
    \phi_y(y_i) = \mu_{gp}(x_i, \cdot) + \lambda^{\frac{1}{2}}\epsilon_i
\end{equation}
where $\epsilon_i \sim \cG\cP(0, r)$ are independent noise functions. By taking the inner product with $\phi_y(y')$ on both sides, we then obtain $k_y(y_i, y') = \mu_{gp}(x_i, y') + \lambda^{\frac{1}{2}}\epsilon_i(y')$. Hence, we can treat $k(y_i, y_j)$ as noisy evaluations of $\mu_{gp}(x_i, y_j)$ and obtain the following posterior mean and covariance for $\mu_{gp}$.





\begin{proposition}[\textbf{\textsc{BayesCME}}] \label{proposition: BL-CME}

The posterior \textsc{GP} of $\mu_{gp}$ given observations $\{\bfx, \bfy\}$ has the following mean and covariance:
\begin{align}
    m_\mu((x, y)) &= k_{x\bfx}(K_{\bfx\bfx} + \lambda I)^{-1}K_{\bfy\bfy}R_{\bfy\bfy}^{-1}r_{\bfy y} \\
    \kappa_\mu((x, y), (x', y')) &= k_{xx'}r_{y, y'} - k_{x\bfx}(K_{\bfx\bfx} + \lambda I)^{-1}k_{\bfx x'}r_{y\bfy}R_{\bfy\bfy}^{-1}r_{\bfy y'}
\end{align}
In addition, the following marginal likelihood can be used for hyperparameter optimisation,
\begin{equation}
\small
    -\frac{N}{2}\Big(\log|K_{\bfx\bfx} + \lambda I| + \log|R|\Big) - \frac{1}{2}\operatorname{Tr}\Big((K_{\bfx\bfx}+\lambda I)^{-1}K_{\bfy\bfy}R_{\bfy\bfy}^{-1}K_{\bfy\bfy}\Big)
\end{equation}
\end{proposition}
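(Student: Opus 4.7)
The plan is to reduce the functional observation model to a finite-dimensional Gaussian regression by evaluating both sides of $\phi_y(y_i) = \mu_{gp}(x_i,\cdot) + \lambda^{1/2}\epsilon_i$ at each training output $y_j$. This produces $N^2$ scalar observations $[K_{\bfy\bfy}]_{ij} = \mu_{gp}(x_i, y_j) + \lambda^{1/2}\epsilon_i(y_j)$, which, under the prior $\mu_{gp}\sim\cG\cP(0, k_x\otimes r_y)$ and the functional noise $\epsilon_i\sim\cG\cP(0, r_y)$ independent across $i$, jointly form a Gaussian vector $\operatorname{vec}(K_{\bfy\bfy})\in\RR^{N^2}$ with Kronecker-structured covariance $\Sigma = (K_{\bfx\bfx} + \lambda I_N)\otimes R_{\bfy\bfy}$: the prior piece contributes $K_{\bfx\bfx}\otimes R_{\bfy\bfy}$ and the across-$i$-independent noise contributes $\lambda I_N\otimes R_{\bfy\bfy}$.

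At a test pair $(x,y)$ the cross-covariance with the observation vector is $k_{x\bfx}\otimes r_{y\bfy}$ and the prior variance is $k_x(x,x)\,r_y(y,y)$, so the standard Gaussian conditioning identity gives $m_\mu((x,y)) = (k_{x\bfx}\otimes r_{y\bfy})^\top\Sigma^{-1}\operatorname{vec}(K_{\bfy\bfy})$ and $\kappa_\mu((x,y),(x',y')) = k_{xx'}r_{yy'} - (k_{x\bfx}\otimes r_{y\bfy})^\top\Sigma^{-1}(k_{\bfx x'}\otimes r_{\bfy y'})$. Applying $(A\otimes B)^{-1}=A^{-1}\otimes B^{-1}$, the bilinear identity $(a^\top\otimes b^\top)(A^{-1}\otimes B^{-1})(a'\otimes b') = (a^\top A^{-1} a')(b^\top B^{-1} b')$, and the column-major vec identity $(a^\top\otimes b^\top)\operatorname{vec}(X) = b^\top X a$, together with symmetry of $K_{\bfy\bfy}$, $R_{\bfy\bfy}$ and $K_{\bfx\bfx}$, both expressions collapse directly to the claimed posterior mean and covariance.

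For the marginal likelihood I would write the log-density of $\operatorname{vec}(K_{\bfy\bfy})$ under $\cN(0,\Sigma)$, factor $\log|\Sigma|=N\log|K_{\bfx\bfx}+\lambda I|+N\log|R|$ via the Kronecker determinant rule, and reduce the quadratic form using $(A\otimes B)^{-1}\operatorname{vec}(X)=\operatorname{vec}(B^{-1}X A^{-T})$ and $\operatorname{vec}(X)^\top\operatorname{vec}(Y)=\operatorname{Tr}(X^\top Y)$, giving $\operatorname{Tr}(R_{\bfy\bfy}^{-1}K_{\bfy\bfy}(K_{\bfx\bfx}+\lambda I)^{-1}K_{\bfy\bfy}^\top)$; cyclic invariance of the trace and symmetry of $K_{\bfy\bfy}$ then produce the stated $\operatorname{Tr}((K_{\bfx\bfx}+\lambda I)^{-1}K_{\bfy\bfy}R_{\bfy\bfy}^{-1}K_{\bfy\bfy})$, after dropping the irrelevant $-(N^2/2)\log 2\pi$ constant.

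The main subtlety will be bookkeeping with the vec convention and the ordering of the Kronecker factors; the algebra itself is mechanical once the joint Gaussian is identified. A secondary point worth flagging is that restricting the functional observation to evaluations at the training outputs $\{y_j\}_{j=1}^N$ is sufficient for the $(x,y)$-marginal posterior, because the GP prior on $\mu_{gp}$ and the noise $\epsilon_i$ share $r_y$ as their $y$-kernel, so the component of the functional observation orthogonal (in the $r_y$-geometry) to $\operatorname{span}\{r_y(y_j,\cdot)\}_{j=1}^N$ is independent of $\mu_{gp}(x,y)$ and drops out of the conditional distribution.
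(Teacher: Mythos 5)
Your proposal is correct and follows essentially the same route as the paper's proof: both treat $\operatorname{vec}(K_{\bfy\bfy})$ as a jointly Gaussian observation vector with Kronecker covariance $(K_{\bfx\bfx}+\lambda I)\otimes R_{\bfy\bfy}$, apply standard Gaussian conditioning, and collapse the resulting expressions via the Kronecker inverse, vec--trace, and Kronecker determinant identities. Your closing remark justifying why evaluations at the training outputs suffice (since the prior and noise share $r_y$) is a useful clarification that the paper leaves implicit, but it does not change the argument.
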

Note that in practice we fix the lengthscale of $k_y$ and $r_y$ when optimising the above likelihood. This is to avoid trivial solutions for the vector-valued regression problem as discussed in \cite{pmlr-v130-ton21a}. The Bayesian version of the \textsc{IME} is derived analogously and we refer the reader to appendix due to limited space.

Finally, with \textsc{V-GPs} on embeddings defined, we can model $g(x)$ as $\langle f, \mu_{gp}(x, \cdot) \rangle_{H_{k_y}}$, which due to the linearity of the inner product, is itself a \textsc{GP}. Here, we first considered the case where $f$ is a \textsc{KRR} learnt from $\cD_2$ and call the model \textsc{BayesIME}.  



\begin{proposition}[\textbf{\textsc{BayesIME}}]
\label{proposition: f is krr cme is gp}
Given dataset $D_1=\{(x_i, y_i, z_i)\}_{i=1}^N$ and $D_2=\{(\tilde{y}_j, t_j)\}_{j=1}^M$, if $f$ is a \textsc{KRR} learnt from $\cD_2$ and $\mu_{Y|do(X)}$ modelled as a \textsc{V-GP} using $\cD_1$, then $g = \langle f, \mu_{Y|do(X)}\rangle \sim \cG\cP(m_2, \kappa_2)$ where,
\begin{align}
    m_2(x) &= \Phi_{\Omega_x}(x)^\top (K_{\Omega_x} + \lambda I)^{-1} K_{\bfy\bfy} R_{\bfy\bfy}^{-1}R_{\bfy\tilbfy} A\\
     \kappa_2(x, x') &= 
     B\Phi_{\Omega_x}(x)^\top \Phi_{\Omega_x}(x) - C\Phi_{\Omega_x}(x)^\top (K_{\Omega_x} + \lambda I)^{-1} \Phi_{\Omega_x}(x')
\end{align}
where $A=(K_{\tilbfy\tilbfy} + \lambda_f I)^{-1}\bft$, $B=A^\top R_{\tilbfy\tilbfy} A$ and $C= A^\top R_{\tilbfy \bfy}R_{\bfy \bfy}^{-1}R_{\bfy \tilbfy}A$
\end{proposition}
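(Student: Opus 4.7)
The main idea is to exploit the fact that, in this setting, $f$ is a fixed (deterministic) element of $\cH_{k_y}$, so $g(x)=\langle f,\mu_{Y|do(X)=x}\rangle_{\cH_{k_y}}$ is a purely linear functional of the V-GP used to model the interventional embedding. First, I invoke the representer theorem for KRR to write $f=\sum_{j=1}^M A_j\,\phi_y(\tilde y_j)=\Phi_{\tilbfy}A$ with $A=(K_{\tilbfy\tilbfy}+\lambda_f I)^{-1}\bft$. Let $\mu_{gp}$ denote the posterior V-GP for $\mu_{Y|do(X)}$ obtained by the IME-analogue of Prop.~\ref{proposition: BL-CME} (the construction being identical to \textsc{BayesCME} except that $k_x$-features are replaced by interventional features $\Phi_{\Omega_x}$, using $k_{\Omega_x}(x,x')=\Phi_{\Omega_x}(x)^\top\Phi_{\Omega_x}(x')$ as the treatment-side kernel and $K_{\Omega_x}$ as its Gram matrix). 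Sample paths of $\mu_{gp}(x,\cdot)$ lie in $\cH_{k_y}$ almost surely thanks to the nuclear dominant construction, so the inner product is well-defined.

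Second, by the reproducing property,
\begin{equation}
g(x)=\langle f,\mu_{gp}(x,\cdot)\rangle_{\cH_{k_y}}=\sum_{j=1}^M A_j\,\mu_{gp}(x,\tilde y_j)=A^\top \mu_{gp}(x,\tilbfy),
\end{equation}
which is a finite linear combination of joint-Gaussian evaluations of $\mu_{gp}$. Hence $\{g(x)\}_{x\in\cX}$ is jointly Gaussian, i.e.\ a GP, and its mean and covariance are obtained by pushing $A$ through the mean/covariance of the vector $\mu_{gp}(x,\tilbfy)$:
\begin{equation}
m_2(x)=A^\top m_\mu\bigl(x,\tilbfy\bigr),\qquad \kappa_2(x,x')=A^\top \Sigma_\mu(x,x')\,A,
\end{equation}
where $\Sigma_\mu(x,x')_{ij}=\kappa_\mu\bigl((x,\tilde y_i),(x',\tilde y_j)\bigr)$.

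Third, I substitute the IME-version of the formulas from Prop.~\ref{proposition: BL-CME}:
\begin{align}
m_\mu((x,y))&=\Phi_{\Omega_x}(x)^\top(K_{\Omega_x}+\lambda I)^{-1}K_{\bfy\bfy}R_{\bfy\bfy}^{-1}r_{\bfy y},\\
\kappa_\mu((x,y),(x',y'))&=\Phi_{\Omega_x}(x)^\top\Phi_{\Omega_x}(x')\,r_{yy'}-\Phi_{\Omega_x}(x)^\top(K_{\Omega_x}+\lambda I)^{-1}\Phi_{\Omega_x}(x')\,r_{y\bfy}R_{\bfy\bfy}^{-1}r_{\bfy y'}.
\end{align}
Evaluating these vectors/matrices at $y,y'\in\tilbfy$ collapses the $y$-dependence into $R_{\bfy\tilbfy}$ and $R_{\tilbfy\tilbfy}$, and grouping the $A$-dependent scalars yields exactly $B=A^\top R_{\tilbfy\tilbfy}A$ and $C=A^\top R_{\tilbfy\bfy}R_{\bfy\bfy}^{-1}R_{\bfy\tilbfy}A$, giving the stated $m_2$ and $\kappa_2$.

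\textbf{Main obstacle.} The conceptual content is essentially free once one has (i) the representer-theorem form of $f$ and (ii) the IME extension of \textsc{BayesCME}; the step $\langle f,\mu_{gp}(x,\cdot)\rangle = A^\top\mu_{gp}(x,\tilbfy)$ is what makes the Gaussianity of $g$ transparent. The genuinely non-trivial work is deferred to the appendix derivation of the IME-analogue of Prop.~\ref{proposition: BL-CME}: one must verify that the nuclear-dominant noise-regression construction still produces sample paths in $\cH_{k_y}$ a.s.\ when the covariate side uses the interventional feature map (backdoor or front-door), and that conditioning on the observed $\{k_y(y_i,\cdot)\}$ yields the same kernel-ridge-style posterior but with $K_{\bfx\bfx}\mapsto K_{\Omega_x}$ and $k_{x\bfx}\mapsto\Phi_{\Omega_x}(x)^\top\Phi_{\Omega_x}$. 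Everything else in the proof is bookkeeping of Gram matrices.
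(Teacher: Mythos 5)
Your proposal is correct and follows essentially the same route as the paper's proof: write $f=\Phi_{\tilbfy}A$ in representer form, observe that $g(x)=\langle f,\mu_{gp}^{do}(x,\cdot)\rangle$ is a linear functional of the posterior V-GP from the causal (IME) analogue of Prop.~\ref{proposition: BL-CME}, and push $A$ through its mean and covariance. The only cosmetic difference is that you obtain $\langle f, r_y(\cdot,y)\rangle$ via the reproducing property at the points $\tilbfy$ (i.e.\ $g(x)=A^\top\mu_{gp}(x,\tilbfy)$), whereas the paper evaluates the same pairing through the integral form of the nuclear dominant kernel; both collapse to the cross-Gram matrices $R_{\tilbfy\bfy}$, $R_{\tilbfy\tilbfy}$ and yield identical $m_2$, $B$, and $C$.
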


\textbf{Summary:} Constants $B$ and $C$ in $\kappa_2$ can be interpreted as different estimation of $||f||_{\cH_{k_y}}$, i.e the RKHS norm of $f$. As a result, for problems that are ``harder'' to learn in $\cD_2$, i.e. corresponding to larger magnitude of $||f||_{\cH_{k_y}}$, will result into larger values of $B$ and $C$. Therefore the covariance $\kappa_2$ can be interpreted as uncertainty in $\cD_1$ scaled by the difficulty of the problem to learn in $\cD_2$.

\subsection{Bayesian Interventional Mean Process}\label{sec: BLCME-GP}
To incorporate both uncertainties in $\cD_1$ and $\cD_2$, we combine ideas from \textsc{IMP} and \textsc{BayesIME} to estimate $g = \langle f, \mu_{Y|do(X)} \rangle$
by placing \textsc{GP}s on both $f$ and $\mu_{Y|do(X)}$. Again as before, a nuclear dominant kernel $r_y$ was used to ensure the \textsc{GP} $f$ is supported on $\cH_{k_y}$. For ease of computation, we consider a finite dimensional approximation of the \textsc{GP}s $f$ and $\mu_{Y|do(X)}$ and estimate $g$ as the RKHS inner product between them. In the following we collate $\bfy$ and $\tilde \bfy$ into a single set of points $\hat \bfy$, which can be seen as landmark points for the finite approximation \cite{trecate1999finite}. We justify this in the Appendix. 





\begin{proposition}[\textbf{\textsc{BayesIMP}}]\label{proposition: prop: BayesIMP}
Let $f$ and $\mu_{Y|do(X)}$ be \textsc{GP}s learnt as above. Denote $\tilde{f}$ and $\tilde{\mu}_{Y|do(X)}$ as the finite dimensional approximation of $f$ and $\mu_{Y|do(X)}$ respectively. Then $\tilde{g} = \langle \tilde f, \tilde \mu_{Y|do(X)} \rangle$ has the following mean and covariance:
\begin{align}
\small
    m_3(x) &= E_x K_{\bfy\hat{\bfy}}K_{\hat{\bfy}\hat{\bfy}}^{-1}R_{\hat{\bfy}\tilbfy}(R_{\tilbfy\tilbfy} + \lambda_f I)^{-1} \bft \\
    \kappa_3(x,x') &= \underbrace{E_x\Theta_1^\top \tilde{R}_{\hat{\bfy}\hat{\bfy}} \Theta_1E_{x'}^\top}_{\text{Uncertainty from } \cD_1 } + \underbrace{\Theta_2^{(a)}F_{xx'} - \Theta_2^{(b)}G_{xx'}}_{\text{Uncertainty from } \cD_2} + \underbrace{\Theta_3^{(a)}F_{xx'} - \Theta_3^{(b)}G_{xx'}}_{\text{Uncertainty from Interaction}}
\end{align}
where $E_x = \Phi_{\Omega_x}(x)^\top (K_{\Omega_x} + \lambda I)^{-1}, F_{xx'}=\Phi_{\Omega_x}(x)^\top \Phi_{\Omega_x}(x'), G_{xx'} = \Phi_{\Omega_x}(x)^\top(K_{\Omega_x} + \lambda I)^{-1}\Phi_{\Omega_x}(x')$, and $\Theta_1 = K_{\hat{\bfy}\hat{\bfy}}^{-1}R_{\hat{\bfy}\bfy}R_{\bfy\bfy}^{-1}K_{\bfy\bfy}$, $\Theta_2^{(a)} = \Theta_4^\top R_{\hat{\bfy}\hat{\bfy}}\Theta_4, \Theta_2^{(b)} =  \Theta_4^\top R_{\hat{\bfy}\bfy}R_{\bfy\bfy}^{-1}R_{\bfy\hat{\bfy}}\Theta_4$ and $\Theta_3^{(a)} = tr(K_{\hat{\bfy} \hat{\bfy}}^{-1}R_{\hat{\bfy}\hat{\bfy}}K_{\hat{\bfy}\hat{\bfy}}^{-1}\bar{R}_{\hat{\bfy}\hat{\bfy}}), \Theta_3^{(b)} = tr(R_{\hat{\bfy}\bfy}R_{\bfy\bfy}^{-1}R_{\bfy\hat{\bfy}}K_{\hat{\bfy}\hat{\bfy}}^{-1}\bar{R}_{\hat{\bfy}\hat{\bfy}}K_{\hat{\bfy}\hat{\bfy}}^{-1})$ and $\Theta_4 = K_{\hat{\bfy}\hat{\bfy}}^{-1}R_{\hat{\bfy}\tilbfy}(K_{\tilbfy\tilbfy}+\lambda_f)^{-1}\bft$. $\bar{R}_{\hat{\bfy}\hat{\bfy}}$ is the posterior covariance of $f$ evaluated at $\hat{\bfy}$ 
\end{proposition}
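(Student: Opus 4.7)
The plan is to reduce the proposition to a computation about a bilinear form in two independent finite-dimensional Gaussians. Collect the landmark points $\hat{\bfy} = \bfy \cup \tilbfy$ and write the finite-dimensional approximations as $\tilde{f} = \Phi_{\hat{\bfy}}\alpha$ and $\tilde{\mu}_{Y|do(X)=x} = \Phi_{\hat{\bfy}}\beta_x$, where $\alpha \in \RR^{|\hat{\bfy}|}$ is Gaussian with mean $\EE[\alpha]$ and covariance $\Sigma_\alpha$ induced by the posterior GP on $f$ from $\cD_2$ (using the nuclear-dominant kernel $r_y$ construction of Sec.\ \ref{sec:BLCME-KRR}), and $\beta_x$ is Gaussian with mean $\EE[\beta_x]$ and covariance $\Sigma_{\beta,x,x'}$ induced by the posterior \textsc{BayesCME}/\textsc{BayesIME} from $\cD_1$. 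Crucially, $\alpha \perp \beta_x$ because $\cD_1$ and $\cD_2$ are independent. With this representation, the RKHS inner product becomes the quadratic form
\begin{equation}
\tilde{g}(x) = \langle \tilde{f}, \tilde{\mu}_{Y|do(X)=x}\rangle_{\cH_{k_y}} = \alpha^\top K_{\hat{\bfy}\hat{\bfy}}\beta_x,
\end{equation}
which is all I need to analyze.

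For the mean, independence gives immediately $m_3(x) = \EE[\alpha]^\top K_{\hat{\bfy}\hat{\bfy}} \EE[\beta_x]$; substituting the closed forms for $\EE[\alpha]$ from the GP posterior of $f$ (this yields the factor $R_{\hat{\bfy}\tilbfy}(R_{\tilbfy\tilbfy}+\lambda_f I)^{-1}\bft$) and $\EE[\beta_x]$ from \textsc{BayesCME} (this yields the factor $E_x K_{\bfy\hat{\bfy}}K_{\hat{\bfy}\hat{\bfy}}^{-1}$) and simplifying the $K_{\hat{\bfy}\hat{\bfy}}K_{\hat{\bfy}\hat{\bfy}}^{-1}$ cancellations reproduces the claimed $m_3$. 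For the covariance, the key identity for independent mean-zero-shifted Gaussian vectors $\alpha, \beta$ and a fixed matrix $K$ is
\begin{align}
\text{Cov}(\alpha^\top K \beta_x, \alpha^\top K \beta_{x'}) &= \tr\!\bigl(K\,\EE[\beta_x\beta_{x'}^\top]\,K\,\EE[\alpha\alpha^\top]\bigr) - \EE[\alpha]^\top K \EE[\beta_x]\,\EE[\alpha]^\top K \EE[\beta_{x'}].
\end{align}
Expanding $\EE[\alpha\alpha^\top]=\Sigma_\alpha + \EE[\alpha]\EE[\alpha]^\top$ and $\EE[\beta_x\beta_{x'}^\top]=\Sigma_{\beta,x,x'}+\EE[\beta_x]\EE[\beta_{x'}]^\top$, the cross term $\EE[\alpha]\EE[\alpha]^\top \otimes \EE[\beta_x]\EE[\beta_{x'}]^\top$ cancels and one is left with exactly three summands:
\begin{equation}
\EE[\alpha]^\top K \Sigma_{\beta,x,x'} K \EE[\alpha] \;+\; \EE[\beta_x]^\top K \Sigma_\alpha K \EE[\beta_{x'}] \;+\; \tr\!\bigl(K\Sigma_{\beta,x,x'}K\Sigma_\alpha\bigr).
\end{equation}
These are precisely the three blocks in $\kappa_3$: the first (pure $\cD_1$ uncertainty) picks up $\EE[\alpha]$ twice and produces $\Theta_1$ with $E_x,E_{x'}$ acting on the outside; the second (pure $\cD_2$ uncertainty) factors through $\Theta_4$ and the covariance $\Sigma_{\beta,x,x'}$, yielding $\Theta_2^{(a)}F_{xx'} - \Theta_2^{(b)}G_{xx'}$ by separating the two terms in $\Sigma_{\beta,x,x'}$; the third (the trace term) gives the interaction contribution, and the same decomposition of $\Sigma_{\beta,x,x'}$ gives $\Theta_3^{(a)}F_{xx'} - \Theta_3^{(b)}G_{xx'}$.

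The remaining work is purely algebraic bookkeeping: substitute the explicit closed forms for $\EE[\alpha], \Sigma_\alpha$ (from Prop.\ \ref{proposition: CausalCMP}-style derivation of $f$ as a nuclear-dominant GP, giving posterior covariance involving $\bar{R}_{\hat{\bfy}\hat{\bfy}}$) and for $\EE[\beta_x], \Sigma_{\beta,x,x'}$ (from Prop.\ \ref{proposition: BL-CME} extended to the interventional setting exactly as in the derivation preceding Prop.\ \ref{proposition: f is krr cme is gp}), and collect terms using the definitions of $E_x, F_{xx'}, G_{xx'}$. The main obstacle, which I would defer to the appendix as the authors do, is twofold: (i) justifying the finite-dimensional approximation via landmark points in such a way that the inner product $\langle \tilde f, \tilde \mu\rangle$ is a faithful proxy for the infinite-dimensional object (this needs the nuclear dominant construction and a projection argument), and (ii) the matrix-level bookkeeping that absorbs the posterior covariances of $\mu_{gp}$ into the compact $\Theta_1,\Theta_2^{(a,b)},\Theta_3^{(a,b)}$ forms — in particular recognizing that the interaction term naturally appears as a trace because $\alpha$ and $\beta$ are independent and both random, whereas in \textsc{IMP} and \textsc{BayesIME} one of the two was deterministic and the trace collapsed.
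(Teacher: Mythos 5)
Your proposal follows essentially the same route as the paper's own proof: the paper likewise passes to finite-dimensional approximations $\tilde f = \Phi_{\hat\bfy}a$, $\tilde\mu = \Phi_{\hat\bfy}b$ anchored at the collated landmark points, reduces $\langle\tilde f,\tilde\mu\rangle$ to the bilinear form $a^\top K_{\hat\bfy\hat\bfy}b$ of independent Gaussian coefficient vectors, and invokes exactly the moment identity you state (their Proposition C.2, $\operatorname{cov}(X^\top Y_1, X^\top Y_2)=\mu_X^\top\Sigma_{Y_1Y_2}\mu_X+\mu_{Y_1}^\top\Sigma_X\mu_{Y_2}+\operatorname{tr}(\Sigma_X\Sigma_{Y_1Y_2})$) before substituting the closed-form posterior moments of $f$ and $\mu_{gp}^{do}$ to obtain the three labelled blocks of $\kappa_3$. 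Your identification of the three summands with the $\cD_1$, $\cD_2$, and interaction (trace) terms, and of the deferred work as the finite-dimensional justification plus matrix bookkeeping, matches the paper's Propositions C.3--C.4 and the final expansion.
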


\textbf{Summary:} While the first two terms in $\kappa_3$ resemble the uncertainty estimates from \textsc{IMP} and \textsc{BayesIME}, the last term acts as an extra interaction between the two uncertainties from $\cD_1$ and $\cD_2$. We note that unlike \textsc{IMP} and \textsc{BayesIME}, $\tilde{g}$ from Prop.\ref{proposition: prop: BayesIMP} is not a \textsc{GP} as inner products between Gaussian vectors are not Gaussian. Nonetheless, the mean and covariance can be estimated.

\section{Experiments}

In this section, we first present an ablation studies on how our methods would perform under settings where we have missing data parts at different regions of the two datasets. We then demonstrate \textsc{BayesIMP}'s proficiency in the Causal Bayesian Optimisation setting. 

In particular, we compare our methods against the sampling approach considered in \cite{aglietti2020causal}. \cite{aglietti2020causal} start by modelling $f: Y \rightarrow T$ as GP and estimate the density $p(Y|do(X))$ using a GP along with \textit{do}-calculus. Then given a treatment $x$, we obtain $L$ samples of $y_l$ and $R$ samples of $f_r$ from their posterior GPs. The empirical mean and standard deviation of the samples $\{f_r(y_l)\}_{l=1, r=1}^{L, R}$ can now be taken to estimate $\EE[T|do(X)=x]$ as well as the correspondingly uncertainty. We emphasize that this point estimation requires repeated sampling and is thus inefficient compared to our approaches, where we explicitly model the uncertainty as covariance function.




\textbf{Ablation study.} In order to get a better intuition into our methods, we will start off with a preliminary example, where we investigate the uncertainty estimates in a toy case. We assume two simple causal graphs $X \xrightarrow[]{} Y$ for $\cD_1$ and $Y \xrightarrow[]{} T$ for $\cD_2$ and the goal is to estimate $\EE[T|do(X)=x]$ (generating process given in the appendix). We compare our methods from Sec.\ref{sec: method} with the sampling-based uncertainty estimation approach described above. In Fig.\ref{fig: prelim} we plot the mean and the $95\%$ credible interval of the resulting GP models for $\EE[T|do(X)=x]$.  On the $x$-axis we also plotted a histogram of the treatment variable $x$ to illustrate its density.



From Fig.\ref{fig: prelim}(a), we see that the uncertainty for sampling is rather uniform across the ranges of $x$ despite the fact we have more data around $x=0$. This is contrary to our methods, which show a reduction of uncertainty at high $x$ density regions. In particular, $x=-5$ corresponds to an extrapolation of data, where $x$ gets mapped to a region of $y$ where there is no data in $\cD_2$. This fact is nicely captured by the spike of credible interval in Fig.\ref{fig: prelim}(c) since \textsc{IMP} utilises uncertainty from $\cD_2$ directly. Nonetheless, \textsc{IMP} failed to capture the uncertainty stemming from $\cD_1$, as seen from the fact that the credible interval did not increase as we have less data in the region $|x| > 5$. In contrast, \textsc{BayesIME} (Fig.\ref{fig: prelim}(d)) gives higher uncertainty around low $x$ density regions but failed to capture the \textbf{extrapolation} phenomenon. Finally, \textsc{BayesIMP} Fig.\ref{fig: prelim}(e) seems to inherit the desirable characteristics from both $\textsc{IMP}$ and $\textsc{BayesIME}$, due to taking into account uncertainties from both $\cD_1$, $\cD_2$. Hence, in the our experiments, we focus on \textsc{BayesIMP} and refer the reader to the appendix for the remaining methods.

\begin{figure}
    \centering
    \includegraphics[width=\textwidth]{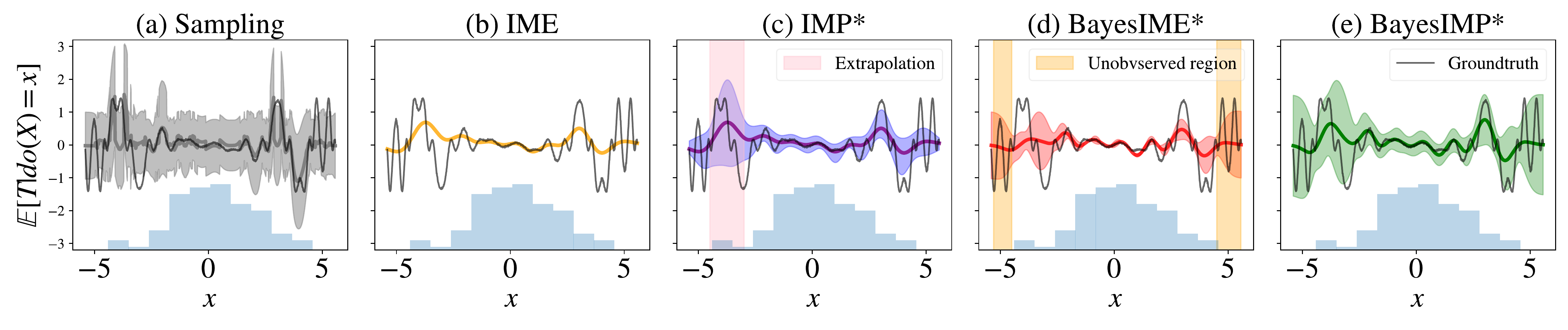}
    \caption{Ablation studies of various methods in estimating uncertainties for an illustrative experiment. $*$ indicates our methods. $N=M=100$ data points are used. Uncertainty from sampling gives a uniform estimate of uncertainty and \textsc{IME} does not come with uncertainty estimates. We see \textsc{IMP} and \textsc{BayesIME} covering different regions of uncertainty while \textsc{BayesIMP} takes the best of both worlds.}
    \label{fig: prelim}
    \vspace*{-0.5cm}
\end{figure}

\textbf{BayesIMP for Bayesian Optimisation (\textsc{BO}).} We now demonstrate, on both synthetic and real-world data, the usefulness of the uncertainty estimates obtained using our methods in BO tasks. Our goal is to utilise the uncertainty estimates to direct the search for the optimal value of $\mathbb{E}[T|do(X)=x]$ by querying as few values of the treatment variable $X$ as possible, i.e. we want to optimize for $x^* = \arg \min_{x\in\mathcal{X}} \EE[T|do(X)=x]$. For the first synthetic experiment (see Fig.\ref{fig: synth1} (Top)), we will use the following two datasets:  $\mathcal{D}_1=\{x_i, u_i, z_i, y_i\}_{i=1}^N$ and  $\mathcal{D}_2=\{\tilde{y}_j, t_j\}_{j=1}^M$. Note that \textsc{BayesIMP} from Prop.\ref{proposition: prop: BayesIMP} is not a \textsc{GP} as inner products between Gaussian vectors are not Gaussian. Nonetheless, with the mean and covariance estimated, we will use moment matching to construct a GP out of \textsc{BayesIMP} for posterior inference. At the start, we are given $\cD_1$ and $\cD_2$, where these observations are used to construct a GP prior for the interventional effect of $X$ on $T$, i.e $\mathbb{E}[T|do(X)=x]$, to provide a ``warm'' start for the \textsc{BO}.

\begin{wrapfigure}{r}{0.31\textwidth}
    \centering
    \includegraphics[width=0.31\textwidth]{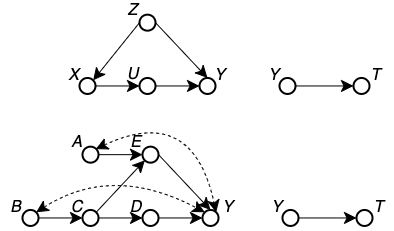}
    \caption{Illustration of synthetic data experiments.}
    \label{fig: synth1}
    \vspace*{-0.3cm}
\end{wrapfigure}



Again we compare \textsc{BayesIMP} with the sampling-based estimation of $\EE[T|do(X)]$ and its uncertainty, which is exactly the formulation used in the Causal Bayesian Optimisation algorithm (CBO) \cite{aglietti2020causal}. In order to demonstrate how \textsc{BayesIMP} performs in the multimodal setting, we will be considering the case where we have the following distribution on $Y$ i.e. $p(y|u, z) = \pi p_1(y|u, z)  +  (1-\pi) p_2(y|u, z)$ where $Y$ is a mixture and $\pi\in[0,1]$. These scenarios might arise when there is an unobserved binary variable which induces a switching between two regimes on how $Y$ depends on $(U,Z)$. In this case, the GP model of \cite{aglietti2020causal} would only capture the conditional expectation of $Y$ with an inflated variance, leading to slower convergence and higher variance in the estimates of the prior as we will show in our experiments. Throughout our experiments, similarly to \cite{aglietti2020causal}, we will be using the expected improvement (EI) acquisition function to select the next point to query.

\textbf{Synthetic data experiments.} We compare \textsc{BayesIMP} to CBO as well as to a simple GP with no learnt prior as baseline. We will be using $N=100$ datapoints for $\mathcal{D}_1$ and $M=50$ datapoints $\mathcal{D}_2$. We ran each method $10$ times and plot the resulting standard deviation for each iteration in the figures below. The data generation and details were added in the Appendix.
\begin{figure}[!htp]
    \centering
    \includegraphics[width=0.28\textwidth]{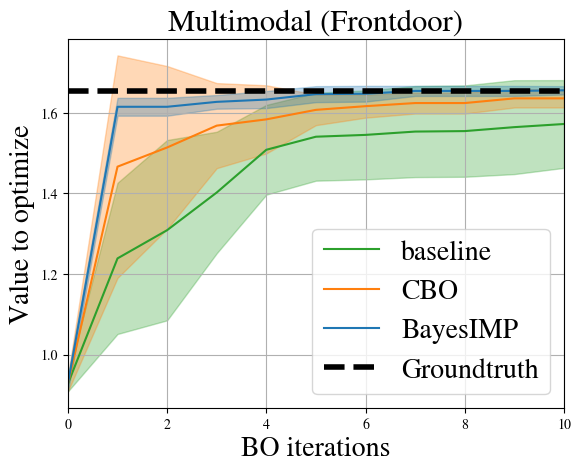}
    \includegraphics[width=0.28\textwidth]{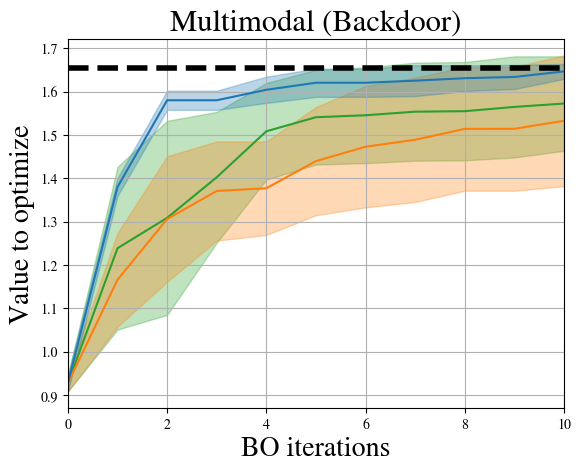}
    \includegraphics[width=0.28\textwidth]{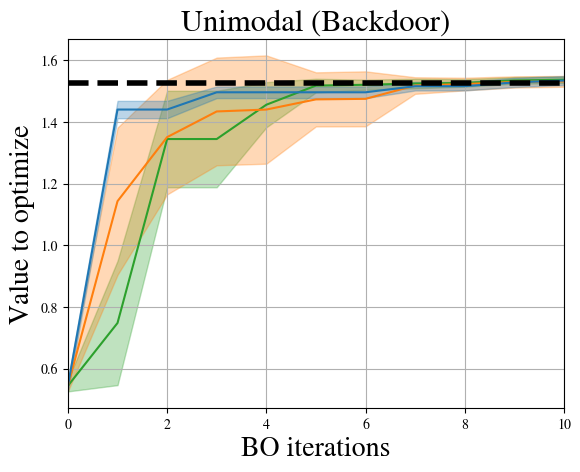}
    \caption{We are interested in finding the maximal value of $\mathbb{E}[T|do(X)=x]$ with as few BO iterations as possible. We ran experiments with \textbf{multimodality} in $Y$. (Left) Using front-door adjustment (Middle) Using backdoor adjustment (Right) Using backdoor adjustment (\textbf{unimodal} $Y$)}
    \label{fig:exp1}
    \vspace{-0.3cm}
\end{figure}
We see from the Fig.\ref{fig:exp1} that \textsc{BayesIMP} is able to find the maxima much faster and with smaller standard deviations, than the current state-of-the-art method, CBO, using both front-door and backdoor adjustments (Fig.\ref{fig:exp1}(Right, Middle)). Given that our method uses more flexible representations of conditional distributions, we are able to circumvent the multimodality problem in $Y$. In addition, we also consider the unimodal version, i.e. $\pi=0$ (see right Fig.\ref{fig:exp1}). We see that the performance of \textsc{CBO} improves in the unimodal setting, however \textsc{BayesIMP} still converges faster than CBO even in this scenario.


Next, we consider a harder causal graph (see Fig.\ref{fig: synth1} (Bottom)), previously considered in \cite{aglietti2020causal}. We again introduce multimodality in the $Y$ variable in order to explore the case of more challenging conditional densities. We see from Fig.\ref{fig:exp2} (Left, Middle), that \textsc{BayesIMP} again converges much faster to the true optima than \textsc{CBO} \cite{aglietti2020causal} and the standard GP prior baseline. We note that the fast convergence of \textsc{BayesIMP} throughout our experiments is not due to simplicity of the underlying BO problems. Indeed, the BO with a standard GP prior requires significantly more iterations. It is rather the availability of the observational data, allowing us to construct a more appropriate prior, which leads to a ``warm'' start of the BO procedure.
\begin{figure}[!htp]
    \centering
     \includegraphics[width=0.28\textwidth]{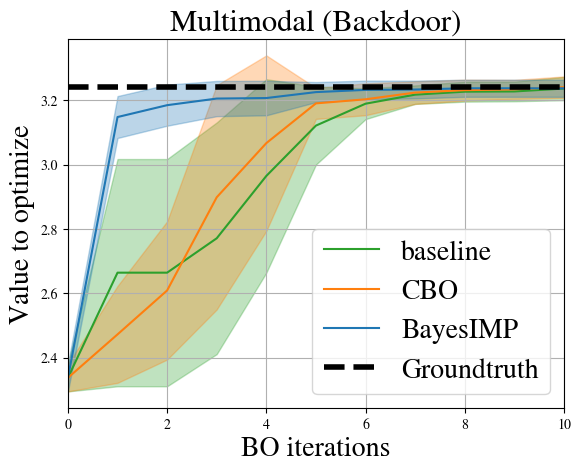}
    \includegraphics[width=0.28\textwidth]{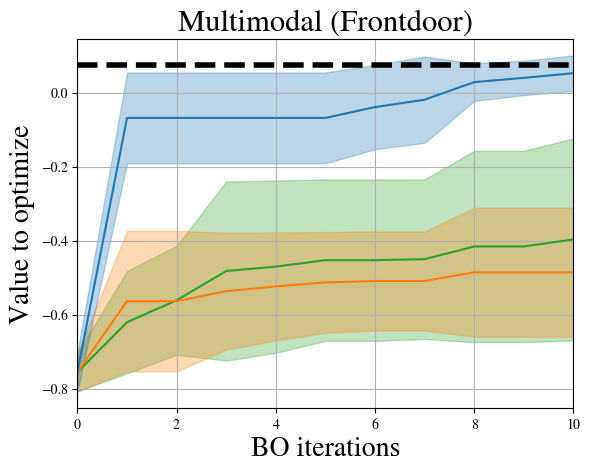}
      \includegraphics[width=0.28\textwidth]{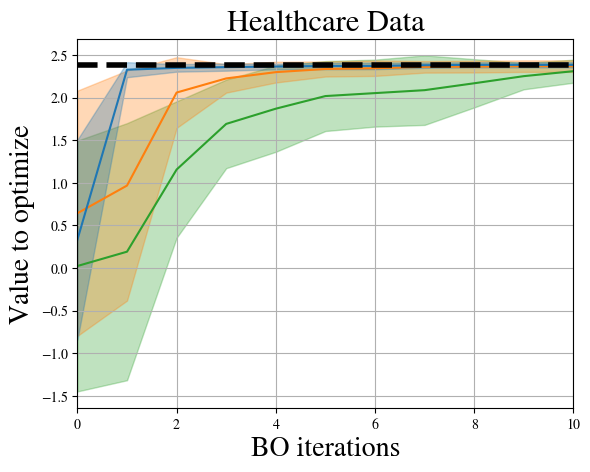}
    \caption{(Left) Experiments where we are interested in $\mathbb{E}[T|do(D)=d]$ with \textbf{multimodal} $Y$, (Middle) Experiments where we are interested in $\mathbb{E}[T|do(E)=e]$ with \textbf{multimodal} $Y$, (Right) Experiments on \textbf{healthcare data} where we are interested in $\EE[\emph{Cancer Volume}|\emph{do(Statin)}]$.}
    \label{fig:exp2}
    \vspace{-0.5cm}
\end{figure}

\textbf{Healthcare experiments. }We conclude with a healthcare dataset corresponding to our motivating medical example in Fig.\ref{fig:medical example}. The causal mechanism graph, also considered in the CBO paper \cite{aglietti2020causal}, studies the effect of certain drugs (Aspirin/Statin) on Prostate-Specific Antigen (PSA) levels \cite{ferro2015use}. In our case, we modify \emph{statin} to be continuous, in order to optimize for the correct drug dosage. However, in contrast to \cite{aglietti2020causal}, we consider a second experimental dataset, arising from a different medical study, which looks into the connection between \emph{PSA} levels and \emph{cancer volume} amount in patients \cite{stamey1989prostate}. Similar to the original CBO paper \cite{aglietti2020causal}, given that interventional data is hard to obtain, we construct data generators based on the true data collected in \cite{stamey1989prostate}. This is done by firstly fitting a GP on the data and then sampling from the posterior (see Appendix for more details). Hence this is the perfect testbed for our model where we are interested in $\EE[\emph{Cancer Volume}|\emph{do(Statin)}]$. We see from Fig.\ref{fig:exp2} (Right) that \textsc{BayesIMP} again converges to the true optima faster than CBO hence allowing us to find the fastest ways of optimizng \emph{cancer volume} by requesting much less interventional data. This could be critical as interventional data in real-life situations can be very expensive to obtain.
\section{Discussion and Conclusion}
In this paper we propose \textsc{BayesIMP} for quantifying uncertainty in the setting of causal data fusion. In particular, our proposed method \textsc{BayesIMP} allows us to represent interventional densities in the RKHS without explicit density estimation, while still accounting for epistemic and aleatoric uncertainties. We demonstrated the quality of the uncertainty estimates in a variety of Bayesian optimization experiments, in both synthetic and real-world healthcare datasets, and achieve significant improvement over current SOTA in terms of convergence speed. However, we emphasize that \textsc{BayesIMP} is not designed to replace \textsc{CBO} but rather an alternative model for interventional effects. 

In the future, we would like to improve \textsc{BayesIMP} over several limitations. As in \cite{aglietti2020causal}, we assumed full knowledge of the underlying causal graph, which might be limiting in practice. Furthermore, as the current formulation of \textsc{BayesIMP} only allows combination of two causal graphs, we hope to generalise the algorithm into arbitrary number of graphs in the future. Causal graphs with recurrent structure will be an interesting direction to explore.

\section{Acknowledgements}
The authors would like to thank Bobby He, Robert Hu, Kaspar Martens and Jake Fawkes for helpful comments. SLC and JFT are supported by the EPSRC and MRC through the OxWaSP CDT programme EP/L016710/1. YWT and DS are supported in part by Tencent AI Lab and DS is supported in part by the Alan Turing Institute (EP/N510129/1). YWT’s research leading to these results has received funding from the European Research Council under the European Union’s Seventh Framework Programme (FP7/2007-2013) ERC grant agreement no. 617071.

\bibliographystyle{unsrt}

\newpage
\section*{Checklist}

\begin{enumerate}
\item For all authors...
\begin{enumerate}
  \item Do the main claims made in the abstract and introduction accurately reflect the paper's contributions and scope?
    \answerYes{}
  \item Did you describe the limitations of your work?
    \answerYes{See section Discussion}
  \item Did you discuss any potential negative societal impacts of your work?
    \answerYes{See below}
  \item Have you read the ethics review guidelines and ensured that your paper conforms to them?
    \answerYes{}
\end{enumerate}
\item If you are including theoretical results...
\begin{enumerate}
  \item Did you state the full set of assumptions of all theoretical results?
    \answerYes{See Appendix}
	\item Did you include complete proofs of all theoretical results?
    \answerYes{See Appendix}
\end{enumerate}
\item If you ran experiments...
\begin{enumerate}
  \item Did you include the code, data, and instructions needed to reproduce the main experimental results (either in the supplemental material or as a URL)?
    \answerYes{See Supp material}
  \item Did you specify all the training details (e.g., data splits, hyperparameters, how they were chosen)?
    \answerYes{See Appendix}
	\item Did you report error bars (e.g., with respect to the random seed after running experiments multiple times)?
    \answerYes{See Experiment section}
	\item Did you include the total amount of compute and the type of resources used (e.g., type of GPUs, internal cluster, or cloud provider)?
    \answerYes{See Appendix}
\end{enumerate}
\item If you are using existing assets (e.g., code, data, models) or curating/releasing new assets...
\begin{enumerate}
  \item If your work uses existing assets, did you cite the creators?
    \answerYes{See Experiment section and Appendix}
  \item Did you mention the license of the assets?
    \answerNA{}
  \item Did you include any new assets either in the supplemental material or as a URL?
    \answerNA{}
  \item Did you discuss whether and how consent was obtained from people whose data you're using/curating?
    \answerNA{}
  \item Did you discuss whether the data you are using/curating contains personally identifiable information or offensive content?
    \answerNA{}
\end{enumerate}
\item If you used crowdsourcing or conducted research with human subjects...
\begin{enumerate}
  \item Did you include the full text of instructions given to participants and screenshots, if applicable?
    \answerNA{}
  \item Did you describe any potential participant risks, with links to Institutional Review Board (IRB) approvals, if applicable?
    \answerNA{}
  \item Did you include the estimated hourly wage paid to participants and the total amount spent on participant compensation?
    \answerNA{}
\end{enumerate}
\end{enumerate}
\section*{Societal Impact}
In this paper we propose a general framework for embedding interventional distributions into a RKHS while accounting for uncertainty in causal datasets. We believe that this is a crucial problem that has not gotten much attention yet, but is nonetheless important for the future of causal inference research. In particular, given that our proposed method allows us to combine datasets from different studies, we envision that this could potentially be used in a variety of scientific areas such as healthcare, drug discovery etc. Finally, the only potential negative impact, would be, when using biased data. Our method relies on knowing the correct causal graph and hence could be misinterpreted when this is not the case.

\newpage
\appendix
\section{Additional background on backdoor/front-door adjustments}

In causal inference, we are often times interested in the interventional distributions i.e $p(y|do(x))$ rather than $p(y|x)$, as the former allows us to account for confounding effects. In order to obtain the interventional density $p(y|do(x))$, we resort to \textit{do}-calculus \cite{pearl1995causal}. Here below we write out the definition for the 2 most crucial formulaes; the front-door and backdoor adjustments, with which we are able to recover the interventional density using only the conditional ones.
\subsection{Back-door Adjustment}

The key intuition of back-door adjustments is to find/adjust a set of confounders that are unaffected by the treatment. We can then study the effect of the treatment has to the target.

\begin{definition}[Back-Door]
A set of variables $Z$ satisfies the backdoor criterion relative to an ordered pair of variables $X_i, X_j$ in a DAG $G$ if: 
\begin{enumerate}
    \item no node in $Z$ is a descendant of $X_i$; and
    \item $Z$ blocks every path between $X_i$ and $X_j$ that contains an arrow into $X_i$
\end{enumerate}
Similarly, if $X$ and $Y$ are two disjoint subsets of nodes in $G$, then $Z$ is said satisfy the back-door criterion relative to $(X,Y)$ if it satisfies the criterion relative to any pair $(X_i, X_j)$ such that $X_i \in X$ and $X_j \in Y$
\end{definition}
Now with a given set $Z$ that satisfies the back-door criterion, we apply the backdoor adjustment,
\begin{theorem}[Back-Door Adjustment]
If a set of variables $Z$ satisfies the back-door criterion relative to $(X,Y)$, then the causal effect of $X$ on $Y$ is identifiable and is given by the formula
\begin{align}
    P(y|do(X)=x) = \int_z p(y|x, z)p(z)dz
\end{align}
\end{theorem}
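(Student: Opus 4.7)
The plan is to derive the formula using the graph-surgery semantics of the $do$-operator together with the d-separation properties enforced by the back-door criterion. Under the intervention $do(X)=x$, the structural equation for $X$ is replaced by the assignment $X=x$; equivalently, all incoming arrows into $X$ are deleted, yielding the mutilated graph $G_{\bar X}$. In this mutilated model the joint factorises as $p(\mathbf{v}\mid do(x)) = \prod_{V\ne X} p(V\mid \operatorname{pa}_G(V))\big|_{X=x}$, so all manipulations of the interventional distribution reduce to ordinary probability calculus in $G_{\bar X}$.

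First I would condition-and-marginalise the target quantity on $Z$ to obtain
\begin{equation*}
p(y\mid do(x)) = \int p(y\mid do(x), z)\, p(z\mid do(x))\, dz.
\end{equation*}
Each factor is then reduced to an observational quantity using one of the two back-door conditions. Condition (1) --- no element of $Z$ is a descendant of $X$ --- implies $X\perp Z$ in $G_{\bar X}$, since any path from $X$ to $Z$ in $G_{\bar X}$ must leave $X$ through an outgoing edge and, because no $Z_i$ descends from $X$, must pass through a collider none of whose descendants lie in the conditioning set; hence $p(z\mid do(x)) = p(z)$ (Rule 3 of do-calculus). Condition (2) --- $Z$ blocks every back-door path from $X$ to $Y$ --- together with the fact that outgoing edges from $X$ create only directed paths from $X$ to $Y$ that are irrelevant for the rule, gives the d-separation $X\perp Y\mid Z$ in the graph $G_{\underline X}$ obtained by cutting outgoing edges from $X$. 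Rule 2 of do-calculus then yields $p(y\mid do(x), z) = p(y\mid x, z)$. Substituting both identities produces the claimed formula.

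The main obstacle is the graph-theoretic bookkeeping: one must verify carefully that the two conditions of the back-door criterion translate precisely into the d-separation statements needed to invoke Rules 3 and 2 in the correct mutilated graphs ($G_{\bar X}$ and $G_{\underline X}$ respectively). Once these independences are established, the rest of the argument is a short chain-rule calculation. I would therefore structure the proof as two small graph-theoretic lemmas — (i) condition (1) implies $X\perp Z$ in $G_{\bar X}$; (ii) condition (2) implies $X\perp Y\mid Z$ in $G_{\underline X}$ — followed by a one-line assembly combining them with the marginalisation identity above. Identifiability follows automatically, since the right-hand side depends only on observational conditionals and marginals.
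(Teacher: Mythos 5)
The paper does not prove this theorem at all: it is stated in Appendix A as background, attributed to Pearl's \emph{do}-calculus \cite{pearl1995causal}, so there is no in-paper argument to compare yours against. Judged on its own, your derivation is the standard textbook proof and it is sound. The decomposition $p(y\mid do(x))=\int p(y\mid do(x),z)\,p(z\mid do(x))\,dz$ is the right starting point, and the two reductions are correctly matched to the two back-door conditions: condition (1) (no $Z_i$ is a descendant of $X$) gives $X\perp Z$ in $G_{\bar X}$, since every path out of $X$ in $G_{\bar X}$ starts with an outgoing edge and, not being a directed path to $Z$, must contain a collider that is closed under the empty conditioning set, whence $p(z\mid do(x))=p(z)$; condition (2) gives $X\perp Y\mid Z$ in $G_{\underline X}$, since the only surviving $X$--$Y$ paths there are the back-door paths of $G$, all blocked by $Z$ (and blocking is preserved when passing to a subgraph, as descendant sets of colliders only shrink), so Rule 2 yields $p(y\mid do(x),z)=p(y\mid x,z)$. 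Two minor remarks: the first identity can be obtained even more elementarily from the truncated factorization, by noting that condition (1) forces all ancestors of $Z$ to be non-descendants of $X$, so the marginal of $Z$ is unchanged by the surgery; and if you write out the two graph-theoretic lemmas you sketch, the collider-preservation point for the $G_{\underline X}$ step is the one place where care is genuinely needed. Identifiability does follow immediately once the right-hand side involves only observational quantities, as you say.
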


\subsection{Front-door Adjustment}
Front-door adjustment deals with the case where confounders are unobserved and hence the backdoor adjustment is not applicable.

\begin{definition}[Front-door]
A set of variables $Z$ is said to satisfy the front-door criterion relative to an ordered pair of variables $(X,Y)$ if:
\begin{enumerate}
    \item $Z$ intercepts  all directed paths from $X$ to $Y$;
    \item there is no back-door path from $X$ to $Z$; and
    \item all back-door paths from $Z$ to $Y$ are blocked by $X$
\end{enumerate}
\end{definition}
Again, with an appropriate front-door adjustment set $Z$, we can identify the do density using the front-door adjustment formula.
\begin{theorem}[Front-Door Adjustment]
If $Z$ satisfies the front-door criterion relative to $(X,Y)$ and if $P(x,z) > 0$, then the causal effect of $X$ on $Y$ is identifiable and is given by the formula:
\begin{align}
    p(y|do(X)=x) = \int_{z}p(z|x)\int_{x'}p(y|x', z)p(x')dx'dz
\end{align}
\end{theorem}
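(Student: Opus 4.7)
The plan is to derive the front-door formula by a sequence of do-calculus manipulations, composing a backdoor adjustment as an inner step. I would proceed in four stages.

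First, I would marginalise over the mediating set $Z$: by the law of total probability applied to the interventional distribution,
\begin{align*}
    p(y\mid do(X)=x) \;=\; \int_{\mathcal Z} p(y\mid do(X)=x, Z=z)\, p(z\mid do(X)=x)\, dz.
\end{align*}
This reduces the task to identifying each of the two factors from observational quantities. Second, I would handle the factor $p(z\mid do(X)=x)$. The front-door criterion (condition~2) states that there is no back-door path from $X$ to $Z$. In the mutilated graph $G_{\overline{X}}$ in which edges into $X$ are removed, $Z$ is d-separated from $X$ given the empty set (because any non-directed $X$--$Z$ path must start with an arrow into $X$ and hence is cut). Rule~2 of do-calculus then gives $p(z\mid do(X)=x) = p(z\mid X=x)$.

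Third, and this is the subtle step, I would simplify $p(y\mid do(X)=x, Z=z)$ to $p(y\mid do(Z)=z)$. The idea is that once we intervene to set $Z=z$, the value of $X$ no longer transmits any causal influence to $Y$, because (by condition~1) $Z$ intercepts every directed path from $X$ to $Y$, while the back-door paths from $Z$ to $Y$ are blocked by $X$ (condition~3); intervening on $Z$ severs $X$'s remaining pathway to $Y$. Formally, I would first insert a $do$ on $Z$ via Rule~2 (writing $p(y\mid do(X)=x, Z=z) = p(y\mid do(X)=x, do(Z)=z)$, justified by the fact that conditioning on $X$ blocks all back-door paths from $Z$ to $Y$, so in $G_{\overline X \underline Z}$ we have $Y\perp Z\mid X$), and then delete the $do$ on $X$ via Rule~3 (since in $G_{\overline{X\cup Z}}$, $X$ is disconnected from $Y$ once all arrows into $X$ and out of $Z$ are removed, so $X$ has no effect on $Y$ after intervening on $Z$), yielding $p(y\mid do(X)=x, do(Z)=z) = p(y\mid do(Z)=z)$.

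Fourth, I would evaluate $p(y\mid do(Z)=z)$ by a standard back-door adjustment using $X$ as the adjustment set. Condition~3 of the front-door criterion is exactly the statement that $X$ blocks every back-door path from $Z$ to $Y$, and $X$ is not a descendant of $Z$, so $X$ satisfies the back-door criterion relative to $(Z,Y)$. Hence
\begin{align*}
    p(y\mid do(Z)=z) \;=\; \int_{\mathcal X} p(y\mid Z=z, X=x')\, p(x')\, dx'.
\end{align*}
Substituting back the expressions from the three simplification steps into the marginalisation identity produces precisely the front-door formula stated. The positivity hypothesis $p(x,z)>0$ is invoked implicitly to ensure that all conditional densities appearing in the derivation are well defined on the relevant support. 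The main obstacle will be giving a careful justification of step three: spelling out which arrows are removed in each mutilated graph and checking the relevant d-separations so that the do-calculus rules apply as intended; the marginalisation and back-door substitution are then routine.
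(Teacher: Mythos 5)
The paper does not actually prove this theorem: it is quoted in Appendix~A as classical background from Pearl \cite{pearl1995causal} with no derivation supplied, so there is no in-paper argument to compare against. What you have written is, in outline, Pearl's own proof of the front-door theorem (marginalise over $Z$, identify $p(z\mid do(x))$ by Rule~2, reduce $p(y\mid do(x),z)$ to $p(y\mid do(z))$ by Rules~2 and~3, then close with a back-door adjustment over $X$), and the overall structure is sound. Your observation that condition~2 of the criterion implies $X$ is not a descendant of $Z$, so that $X$ is a legitimate back-door set for $(Z,Y)$, is the right way to license the final step.

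Two bookkeeping slips in the graph surgery should be fixed. In step two you justify $p(z\mid do(X)=x)=p(z\mid X=x)$ by d-separation of $X$ and $Z$ in $G_{\overline{X}}$ (arrows into $X$ removed); but in that graph the directed path $X\to\cdots\to Z$ survives, so $X$ and $Z$ are \emph{not} d-separated there. Rule~2 requires d-separation in $G_{\underline{X}}$ (arrows \emph{out of} $X$ removed), where the only surviving $X$--$Z$ paths are back-door paths, which condition~2 blocks --- your parenthetical actually describes exactly this situation, just attached to the wrong mutilated graph. Similarly, in the Rule~3 step you write $G_{\overline{X\cup Z}}$ but describe it as removing arrows ``out of'' $Z$; the correct graph for that rule removes arrows \emph{into} both $X$ and $Z$. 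Finally, as you acknowledge, the d-separation checks in step three are argued for the canonical four-node front-door graph; a complete proof must verify them for an arbitrary DAG satisfying conditions~1--3, which is where most of the remaining work lies. None of these issues affects the validity of the strategy, and the conclusion matches the stated formula.
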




\newpage

\section{Derivations}

\subsection{CMP Derivation}

\begin{customprop}{2}
Given dataset $D_1=\{(x_i, y_i, z_i)\}_{i=1}^N$ and $D_2=\{(\tilde{y}_j, t_j)\}_{j=1}^M$, if $f$ is the posterior \textsc{GP} learnt from $\cD_2$, then $g = \int f(y)p(y|do(X))dy$ is a \textsc{GP} $\cG\cP(m_1, \kappa_1)$ defined on the treatment variable $X$ with the following mean and covariance estimated using $\hat{\mu}_{Y|do(X)}$ ,
\begin{align}
\small
    m_1(x) &= \langle \hat{\mu}_{Y|do(x)}, m_f \rangle_{\cH_{k_y}} = \Phi_{\Omega_x}(x)^\top(K_{\Omega_x} + \lambda I)^{-1} K_{\bfy\tilde{\bfy}}(K_{\tilde{\bfy}\tilde{\bfy}} + \lambda_f I)^{-1}\bft\\
    \kappa_1(x, x') &= \hat{\mu}_{Y|do(x)}^\top \hat{\mu}_{Y|do(x')} - \hat{\mu}_{Y|do(x)}^\top\Phi_{\tilde{\bfy}}(K_{\tilde{\bfy}\tilde{\bfy}} + \lambda I)^{-1}\Phi_{\tilde{\bfy}}^\top  \hat{\mu}_{Y|do(x')} \\
    &= \Phi_{\Omega_x}(x)^\top (K_{\Omega_x} + \lambda I)^{-1} \tilde{K}_{\bfy\bfy} (K_{\Omega_x} + \lambda I)^{-1} \Phi_{\Omega_x}(x')
\end{align}
where $\hat{\mu}_{Y|do(x)} = \hat{\mu}_{Y|do(X)=x}, K_{\tilde{\bfy} \bfy} = \Phi_{\tilde{\bfy}}^\top \Phi_{\bfy}$, $m_f$ and $\tilde{K}_{\bfy\bfy}$ are the posterior mean function and covariance of $f$ evaluated at $\bfy$ respectively. $\lambda > 0$ is the regularisation of the \textsc{CME}. $\lambda_f > 0$ is the noise term for GP $f$. $\Omega_x$ is the set of variables as specified in Prop.\ref{proposition: causal_embedding}. 
\end{customprop}

\begin{proof}[Proof for Proposition \ref{proposition: CausalCMP}]

Integral operator preserves Gaussianity under mild conditions (see conditions \cite{chau2021adownscaling}), therefore
\begin{align}
    g(x) = \int f(y) dP(y|do(X)=x)
\end{align}
is also a Gaussian. For a standard GP prior $f \sim GP(0, k_y)$ and data $D_E = \{(\tilde{y}_j, t_j)\}_{j=1}^M$, standard conjugacy results for GPs lead to the posterior GP with mean $\bar{m}(y) = k_{y \tilde{\bfy}}(K_{\tilde{\bfy}\tilde{\bfy}} + \lambda_f I)^{-1}{\bf t}$ and covariance $\bar{k_y}(y, y') = k_y(y, y') - k_{y\tilde{\bfy}}(K_{\tilde{\bfy}\tilde{\bfy}} + \lambda_f I)^{-1}k_{\tilde{\bfy}y}$. Similar to \cite{briol2019probabilistic}, repeated application of Fubini's theorem yields:
\begin{align}
    \mathbb{E}_f[g(x)] &= \mathbb{E}_f\Bigg[\int f(y) dP(y|do(X)=x\Bigg] = \int \mathbb{E}_f[f(y)]dP(y|do(X)=x)\\
    &= \int \bar{m}(y) dP(y|do(X)=x) = \langle \bar{m}, \hat{\mu}_{Y|do(X)=x}\rangle\\
    cov(g(x), g(x')) &= \int\int cov(f(y), f(y')) dP(y|do(X)=x)dP(y'|do(X)=x) \\
    &= \int\int \bar{k}_y(y, y') dP(y|do(x))dP(y'|do(x')) \\
    &= \langle \mu_{Y|do(x)}, \mu_{Y|do(x')}\rangle - \hat{\mu}_{Y|do(x)}^\top\Phi_{\tilde{\bfy}}(K_{\tilde{\bfy}\tilde{\bfy}} + \lambda I)^{-1}\Phi_{\tilde{\bfy}}^\top  \hat{\mu}_{Y|do(x')} \\
    &= \Phi_{\Omega_x}(x)^\top (K_{\Omega_x} + \lambda I)^{-1} \tilde{K}_{\bfy\bfy} (K_{\Omega_x} + \lambda I)^{-1} \Phi_{\Omega_x}(x')
\end{align}
\end{proof}

\subsection{Choice of Nuclear Dominant Kernel} 

Recall in section \ref{sec:BLCME-KRR}, we introduced the nuclear dominant kernel $r_y$ to ensure samples of $\mu_{gp} \sim GP(0, k_x\otimes r_y)$ are supported in $\cH_{k_x} \otimes \cH_{k_y}$ with probability 1. In the following we will present the analytic form of the nuclear dominant kernel we used in this paper, which is the same as the formulation introduced in Appendix A.2 and A.3 of \cite{flaxman2016bayesian}. Pick $k_y$ as the RBF kernel, i.e 
\begin{align}
    k_y(y,y') = \exp\Big(-\frac{1}{2}(y-y')^\top \Sigma_\theta (y-y')\Big)
\end{align}
where $\Sigma_\theta$ is covariance matrix for the kernel $k_y$. The nuclear dominant kernel construction from \cite{flaxman2016bayesian} then yield the following expression:
\begin{align}
    r_y(y, y') = \int k_y(y, u) k_y(u, y') \nu(du)
\end{align}
where $\nu$ is some finite measure. If we pick $\nu(du) = \exp(\frac{||u||_2^2}{2\eta^2}) du$, then we have 
\begin{align}
    r_y(y, y') &= (2\pi)^{D/2}|2\Sigma_{\theta}^{-1} + \eta^{-2} I|^{-1/2}\exp \Big(-\frac{1}{2} (y-y')^\top (2\Sigma_{\theta})^{-1}(y-y')\Big) \\
    &\quad\quad\quad \times \exp\Big(-\frac{1}{2}\Big(\frac{y+y'}{2}\Big)^\top \Big(\frac{1}{2}\Sigma_\theta + \eta^2 I\Big)^{-1}\Big(\frac{y+y'}{2}\Big)\Big)
\end{align}

\subsection{BayesCME derivations}

\begin{customprop}{3}
The posterior \textsc{GP} of $\mu_{gp}$ given observations $\{\bfx, \bfy\}$ has the following mean and covariance:
\begin{align}
    m_\mu((x, y)) &= k_{x\bfx}(K_{\bfx\bfx} + \lambda I)^{-1}K_{\bfy\bfy}R_{\bfy\bfy}^{-1}r_{\bfy y} \\
    \kappa_\mu((x, y), (x', y')) &= k_{xx'}r_{y, y'} - k_{x\bfx}(K_{\bfx\bfx} + \lambda I)^{-1}k_{\bfx x'}r_{y\bfy}R_{\bfy\bfy}^{-1}r_{\bfy y'}
\end{align}
In addition, the following marginal likelihood can be used for hyperparameter optimisation,
\begin{equation}
\small
    -\frac{N}{2}\Big(\log|K_{\bfx\bfx} + \lambda I| + \log|R|\Big) - \frac{1}{2}\operatorname{tr}\Big((K_{\bfx\bfx}+\lambda I)^{-1}K_{\bfy\bfy}R_{\bfy\bfy}^{-1}K_{\bfy\bfy}\Big)
\end{equation}
\end{customprop}

\begin{proof}[Proof of Proposition 3.]

Recall the Bayesian formulation of CME corresponds to the following model,
\begin{eqnarray*}
\mu_{gp} & \sim & GP\left(0,k_x\otimes r_y\right),\\
k_y\left(y_{i},y'\right) & = & \mu_{gp}\left(x_{i},y'\right)+\lambda^{1/2}\epsilon_{i}\left(y'\right)
\end{eqnarray*}
with $\epsilon_{i}\sim GP(0,r_y)$ independently across $i$. Now consider $k_y(y_i, y_j)$ as noisy evaluations of $\mu_{gp}(x_i, y_j)$, we have the predictive posterior mean as

\begin{eqnarray*}
\textrm{vec}\left(r_{{\bf y}y}k_{x{\bf x}}\right)^{\top}\left(K_{\bfx\bfx}\otimes R_{\bfy\bfy}+\lambda I\otimes R_{\bfy\bfy}\right)^{-1}\textrm{vec}(K_{\bfy\bfy}) & = & \textrm{vec}\left(r_{{\bf y}y}k_{x{\bf x}}\right)^{\top}\left(\left(K_{\bfx\bfx}+\lambda I\right)^{-1}\otimes R_{\bfy\bfy}^{-1}\right)\textrm{vec\ensuremath{\left(K_{\bfy\bfy}\right)}}\\
 & = & \textrm{vec}\left(r_{{\bf y}y}k_{x{\bf x}}\right)^{\top}\textrm{vec}\left(R_{\bfy\bfy}^{-1}K_{\bfy\bfy}\left(K_{\bfx\bfx}+\lambda I\right)^{-1}\right)\\
 & = & \textrm{tr}\left(r_{{\bf y}y}k_{x{\bf x}}\left(K_{\bfx\bfx}+\lambda I\right)^{-1}K_{\bfy\bfy}R_{\bfy\bfy}^{-1}\right)\\
 & = & k_{x{\bf x}}\left(K_{\bfx\bfx}+\lambda I\right)^{-1}K_{\bfy\bfy}R_{\bfy\bfy}^{-1}r_{{\bf y}y}.
\end{eqnarray*}

And the covariance is,

\begin{eqnarray*}
\kappa\left(\left(x,y\right),\left(x',y'\right)\right) & = & k(x,x')r(y,y')-\textrm{vec}\left(r_{{\bf y}y}k_{x{\bf x}}\right)^{\top}\left(K_{\bfx\bfx}\otimes R_{\bfy\bfy}+\lambda I\otimes R_{\bfy\bfy}\right)^{-1}\textrm{vec}\left(r_{{\bf y}y'}k_{x'{\bf x}}\right)\\
 & = & k(x,x')r(y,y')-\textrm{vec}\left(r_{{\bf y}y}k_{x{\bf x}}\right)^{\top}\left(\left(K_{\bfx\bfx}+\lambda I\right)^{-1}\otimes R_{\bfy\bfy}^{-1}\right)\textrm{vec}\left(r_{{\bf y}y'}k_{x'{\bf x}}\right)\\
 & = & k(x,x')r(y,y')-\textrm{vec}\left(r_{{\bf y}y}k_{x{\bf x}}\right)^{\top}\textrm{vec}\left(R_{\bfy\bfy}^{-1}r_{{\bf y}y'}k_{x'{\bf x}}\left(K_{\bfx\bfx}+\lambda I\right)^{-1}\right)\\
 & = & k(x,x')r(y,y')-\textrm{tr}\left(r_{{\bf y}y}k_{x{\bf x}}\left(K_{\bfx\bfx}+\lambda I\right)^{-1}k_{{\bf x}x'}r_{y'{\bf y}}R_{\bfy\bfy}^{-1}\right)\\
 & = & k(x,x')r(y,y')-k_{x{\bf x}}\left(K_{\bfx\bfx}+\lambda I\right)^{-1}k_{{\bf x}x'}r_{y'{\bf y}}R_{\bfy\bfy}^{-1}r_{{\bf y}y}.
\end{eqnarray*}

To compute the log likelihood, note that it contains the following two terms:
\begin{eqnarray*}
\textrm{vec}(K_{\bfy\bfy})^\top \left(K_{\bfx\bfx}\otimes R_{\bfy\bfy}+\lambda I\otimes R_{\bfy\bfy}\right)^{-1}\textrm{vec}(K_{\bfy\bfy}) & = & \textrm{vec}(K_{\bfy\bfy})^{\top}\left(\left(K_{\bfx\bfx}+\lambda I\right)^{-1}\otimes R_{\bfy\bfy}^{-1}\right)\textrm{vec\ensuremath{\left(K_{\bfy\bfy}\right)}}\\
& = & \textrm{vec}(K_{\bfy\bfy})^{\top}\textrm{vec}\left(R_{\bfy\bfy}^{-1}K_{\bfy\bfy}\left(K_{\bfx\bfx}+\lambda I\right)^{-1}\right)\\
& = & \textrm{tr}\left(K_{\bfy\bfy}\left(K_{\bfx\bfx}+\lambda I\right)^{-1}K_{\bfy\bfy}R_{\bfy\bfy}^{-1}\right)\\
\end{eqnarray*}
and 
\begin{eqnarray*}
-\frac{1}{2}\Big(\log |(K_{\bfx\bfx}+\lambda I)\otimes R_{\bfy\bfy} |\Big) & = & -\frac{1}{2} \log\Big(|(K_{\bfx\bfx}+\lambda I)|^N |R|^N\Big) \\
& = & -\frac{N}{2}\Big(\log|K_{\bfx\bfx} + \lambda I| + \log|R|\Big)
\end{eqnarray*}
where we used the fact that determinant of Kronecker product of two $N\times N$ matrices $A, B$ is: $|A\otimes B| = |A|^N|B|^N$.

Therefore the log likelihood can be expressed as
\begin{equation}
    -\frac{N}{2}\Big(\log|K_{\bfx\bfx} + \lambda I| + \log|R|\Big) - \frac{1}{2}\operatorname{tr}\Big((K_{\bfx\bfx}+\lambda I)^{-1}K_{\bfy\bfy}R_{\bfy\bfy}^{-1}K_{\bfy\bfy}\Big)    
\end{equation}

\end{proof}

\subsection{Causal BayesCME derivations}
The following proposition extend \textsc{BayesCME} to the causal setting.

\begin{customprop}{C.1}[Causal BayesCME]
Denote $\mu_{gp}^{do}$ as the GP modelling $\mu_{Y|do(X)}$. Then using the $\Omega$ notations introduced in proposition \ref{proposition: causal_embedding}, the posterior \textsc{GP} of $\mu_{gp}^{do}$ given observations $\{\bfx,\bfz, \bfy\}$ has the following mean and covariance:
\begin{align}
    m_\mu^{do}((x, y)) &= \Phi_{\Omega_x}(x)^\top \Big(K_{\Omega_x} + \lambda I\Big)^{-1}K_{\bfy\bfy}R_{\bfy\bfy}^{-1}r_{\bfy y}\\
    \kappa_\mu^{do}((x, y), (x', y')) &= \Phi_{\Omega_x}(x)^\top\Phi_{\Omega_x}(x')r_{y, y'} - \Phi_{\Omega_x}(x)^\top(K_{\Omega_x} + \lambda I)^{-1}\Phi_{\Omega_x}(x')r_{y\bfy}R_{\bfy\bfy}^{-1}r_{\bfy y'}
\end{align}
In addition, the following marginal likelihood can be used for hyperparameter optimisation,
\begin{equation}
\small
    -\frac{N}{2}\Big(\log|K_{\Omega_x} + \lambda I| + \log|R|\Big) - \frac{1}{2}\operatorname{tr}\Big((K_{\Omega_x}+\lambda I)^{-1}K_{\bfy\bfy}R_{\bfy\bfy}^{-1}K_{\bfy\bfy}\Big)
\end{equation}
\end{customprop}
\begin{proof}[Proof of Proposition C.1]
In the following we will assume $Z$ is the backdoor adjustment variable. Front-door and general cases follow analogously. Denote $\mu_{gp}((x,z), y)$ as the \textsc{BayesCME} model for $\mu_{Y|X=x, Z=z}(y)$. As we have
\begin{align}
    \mu_{Y|do(X)=x} &= \int \int \phi_y(y) p(y|x,z)p(z)dz dy \\
                    &= \int \mu_{Y|X=x, Z=z} p(z)dz \\
                    &= \EE_Z[\mu_{Y|X=x,Z}]
\end{align}
It is thus natural to define $\mu_{gp}^{do}$ as the induced GP when we replace $\mu_{Y|X=x, Z=z}$ with $\mu_{gp}((x,z), \cdot)$,
\begin{align}
\mu_{gp}^{do}(x, \cdot) = \EE_Z[\mu_{gp}((x,Z), \cdot)]    
\end{align}
Now we can compute the mean of $\mu_{gp}^{do}$,
\begin{align}
m_{\mu}^{do}(x, y) &= \EE_{\mu_{gp}}\EE_Z[\mu_{gp}(x,Z,y)] \\
                   &= \EE_Z\Big((k_{x\bfx}\odot k_z(Z,\bfz)) (K_{\bfx\bfx} \odot K_{\bfz\bfz} + \lambda I)^{-1}K_{\bfy\bfy}R_{\bfy\bfy}^{-1}r_{\bfy y}\Big) \\
                   &= \Big((k_{x\bfx}\odot \mu_z^\top \Phi_\bfz) (K_{\bfx\bfx} \odot K_{\bfz\bfz} + \lambda I)^{-1}K_{\bfy\bfy}R_{\bfy\bfy}^{-1}r_{\bfy y}\Big) \\
                   &= \Phi_{\Omega_x}(x)^\top (K_{\Omega_x} + \lambda I)^{-1} K_{\bfy\bfy}R_{\bfy\bfy}^{-1}r_{\bfy y}
\end{align}
Similarly for covariance, we have,
\begin{align}
    \kappa_{\mu}^{do}((x, y), (x',y')) &= \EE_{Z,Z'}[cov\big(\mu_{gp}((x,Z),y), \mu_{gp}((x', Z'), y')\big)] \\
\intertext{and the rest is just algebra,}
&= \Phi_{\Omega_x}(x)^\top\Phi_{\Omega_x}(x')r_{y, y'} - \Phi_{\Omega_x}(x)^\top(K_{\Omega_x} + \lambda I)^{-1}\Phi_{\Omega_x}(x')r_{y\bfy}R_{\bfy\bfy}^{-1}r_{\bfy y'}
\end{align}
\end{proof}

\subsection{BayesIME derivation}
Now we have derived the Causal \textsc{BayesCME}, it is time to compute $\langle f, \mu_{gp}^{do}(x,\cdot)\rangle$ where $f\in\cH_{k_y}$. This requires us to be able to compute $\langle f, r_y(\cdot, y) \rangle$ which corresponds to the following:
\begin{align}
    \langle f, r_y(\cdot, y) \rangle_{\cH_{k_y}} &= \Big\langle f, \int k_y(\cdot, u)k_y(u, y) \nu(du)\Big\rangle \\
    &= \int f(u)k_y(u, y)\nu(du) \\
\intertext{when $f$ is a \textsc{KRR} learnt from $\cD_2$, i.e $f(y) = k_{y\tilbfy}(K_{\tilbfy\tilbfy}+\lambda_f I)^{-1}\bft$, we have}
    &= \bft^\top (K_{\tilbfy\tilbfy} + \lambda_f I)^{-1}\int k_{\tilbfy u} k_y(u, y) \nu(du) \\
    &= \bft^\top(K_{\tilbfy\tilbfy} + \lambda_f I)^{-1}r_{\tilbfy y}
\end{align}
Now we are ready to derive \textsc{BayesIME}.

\begin{customprop}{\ref{proposition: f is krr cme is gp}}
Given dataset $D_1=\{(x_i, y_i, z_i)\}_{i=1}^N$ and $D_2=\{(\tilde{y}_j, t_j)\}_{j=1}^M$, if $f$ is a \textsc{KRR} learnt from $\cD_2$ and $\mu_{Y|do(X)}$ modelled as a \textsc{V-GP} using $\cD_1$, then $g = \langle f, \mu_{Y|do(X)}\rangle \sim \cG\cP(m_2, \kappa_2)$ where,
\begin{align}
    m_2(x) &= \Phi_{\Omega_x}(x)^\top (K_{\Omega_x} + \lambda I)^{-1} K_{\bfy\bfy} R_{\bfy\bfy}^{-1}R_{\bfy\tilbfy} A\\
     \kappa_2(x, x') &= 
     B\Phi_{\Omega_x}(x)^\top \Phi_{\Omega_x}(x) - C\Phi_{\Omega_x}(x)^\top (K_{\Omega_x} + \lambda I)^{-1} \Phi_{\Omega_x}(x')
\end{align}
where $A=(K_{\tilbfy\tilbfy} + \lambda_f I)^{-1}\bft$, $B=A^\top R_{\tilbfy\tilbfy} A$ and $C= A^\top R_{\tilbfy \bfy}R_{\bfy \bfy}^{-1}R_{\bfy \tilbfy}A$
\end{customprop}

\begin{proof}[Proof of Proposition \ref{proposition: f is krr cme is gp}] Using the $\mu_{gp}^{do}$ notation from Proposition $C.1$, we can write the inner product as $\langle \mu_{gp}^{do}(x, \cdot), f\rangle$, where the mean is,
\begin{align}
    m_2(x) &= \EE[\mu_{gp}^{do}(x, \cdot)]^\top f \\
           &= \Phi_{\Omega_x}(x)^\top (K_{\Omega_x} + \lambda I)^{-1} K_{\bfy\bfy} R_{\bfy\bfy}^{-1}R(\bfy, \cdot)^\top f \\
           &= \Phi_{\Omega_x}(x)^\top (K_{\Omega_x} + \lambda I)^{-1} K_{\bfy\bfy} R_{\bfy\bfy}^{-1}R_{\bfy\tilbfy}(K_{\tilbfy\tilbfy} + \lambda_f I)^{-1}\bft
\end{align}
where we used the fact $f$ is a KRR learnt from $\cD_2$. The covariance can then be computed by realising $cov( f^\top \mu_{gp}^{do}(x, \cdot), f^\top \mu_{gp}^{do}(x', \cdot)) = f^\top cov(\mu_{gp}^{do}(x, \cdot), \mu_{gp}^{do}(x', \cdot))f$.
\end{proof}

\subsection{BayesIMP Derivations}

\textsc{BayesIMP} can be understood as a model characterising the RKHS inner product of Gaussian Processes. In the following, we will first introduce some general theory of inner product of GPs, and introduce a finite dimensional scheme later on. Finally, we will show how \textsc{BayesIMP} can be derived right away from this general framework.

Before that, we will showcase the following identity for computing variance of inner products of independent multivariate Gaussians,

\begin{customprop}{C.2}
Let $\mu_X:=\mathbb{E}[X]$ and $\Sigma_X := Var(X)$ be the mean and variance of a multivariate Gaussian rv, similarly $\mu_Y, \Sigma_Y$ for Gaussian rv $Y$. If $X$ and $Y$ are independent, then the variance of their inner product is given by the following expression,
\begin{align}
    Var(X^\top Y) = \mu_X^\top \Sigma_Y \mu_X + \mu_Y^\top \Sigma_X \mu_Y + tr\Big({\Sigma_Y \Sigma_X}\Big)
\end{align}
Moreover, the covariance between $X^\top Y_1$, $X^\top Y_2$ follows a similar form,
\begin{align}
    cov(X^\top Y_1, X^\top Y_2) &= \mu_X^\top \Sigma_{Y_1Y_2}\mu_X + \mu_{Y_1}^\top \Sigma_X \mu_{Y_2} + \operatorname{tr}(\Sigma_X\Sigma_{Y_1Y_2})
\end{align}
\end{customprop}

\begin{proof}
\begin{equation}
\begin{aligned}
\operatorname{Var}\left[X^{\top} Y\right] &=\mathbb{E}\left[\left(X^{\top} Y\right)^{2}\right]-\mathbb{E}\left[X^{\top} Y\right]^{2} \\
&=\mathbb{E}\left[X^{\top} Y Y^{\top} X\right]-\left(\mathbb{E}[X]^{\top} \mathbb{E}[Y]\right)^{2} \\
&=\mathbb{E}\left[\operatorname{tr}\left(X X^{\top} Y Y^{\top}\right)\right]-\left(\mu_{X}^{\top} \mu_{Y}\right)^{2} \\
&=\operatorname{tr}\left(\mathbb{E}\left[X X^{\top}\right] \mathbb{E}\left[Y Y^{\top}\right]\right)-\left(\mu_{X}^{\top} \mu_{Y}\right)^{2} \\
&=\operatorname{tr}\left(\left(\mu_{X} \mu_{X}^{\top}+\Sigma_{X}\right)\left(\mu_{Y} \mu_{Y}^{\top}+\Sigma_{Y}\right)\right)-\left(\mu_{X}^{\top} \mu_{Y}\right)^{2} \\
&=\operatorname{tr}\left(\mu_{X} \mu_{X}^{\top} \mu_{Y} \mu_{Y}^{\top}\right)+\operatorname{tr}\left(\mu_{X} \mu_{X}^{\top} \Sigma_{Y}\right)+\operatorname{tr}\left(\Sigma_{X} \mu_{Y} \mu_{Y}^{\top}\right)+\operatorname{tr}\left(\Sigma_{X} \Sigma_{Y}\right)-\left(\mu_{X}^{\top} \mu_{Y}\right)^{2} \\
&=\left(\mu_{X}^{\top} \mu_{Y}\right)^{2}+\operatorname{tr}\left(\mu_{X}^{\top} \Sigma_{Y} \mu_{X}\right)+\operatorname{tr}\left(\mu_{Y}^{\top} \Sigma_{X} \mu_{Y}\right)+\operatorname{tr}\left(\Sigma_{X} \Sigma_{Y}\right)-\left(\mu_{X}^{\top} \mu_{Y}\right)^{2} \\
&=\mu_{X}^{\top} \Sigma_{Y} \mu_{X}+\mu_{Y}^{\top} \Sigma_{X} \mu_{Y}+\operatorname{tr}\left(\Sigma_{X} \Sigma_{Y}\right)
\end{aligned}
\end{equation}
Generalising to the case for covariance is straight forward.
\end{proof}

\subsubsection*{RKHS inner product of Gaussian Processes}
Let $f_{1}\sim GP(m_{1},\kappa_{1})$ and $f_{2}\sim GP(m_{2},\kappa_{2})$.
We assume that $f$ and $g$ are both supported within the RKHS $\mathcal{H}_{k}$.
Can we characterise the distribution of $\left\langle f_{1},f_{2}\right\rangle _{\mathcal{H}_{k}}$? 

This situation would arise if $f_{1}$ and $f_{2}$ arise as GP posteriors
in a regression model corresponding to the priors $f_{1}\sim GP(0,r_{1})$,
$f_{2}\sim GP(0,r_{2})$ where $r_{1},r_{2}$ satisfy the nuclear
dominance property. In particular, we could choose
\begin{eqnarray*}
r_{1}(u,v) & = & \int k\left(u,z\right)k\left(z,v\right)\nu_{1}\left(dz\right),\\
r_{2}\left(u,v\right) & = & \int k\left(u,z\right)k\left(z,v\right)\nu_{2}\left(dz\right).
\end{eqnarray*}

Posterior means in that case can be expanded as 
\[
m_{1}=\sum\alpha_{i}r_{1}\left(\cdot,x_{i}\right),\qquad m_{2}=\sum\beta_{j}r_{2}\left(\cdot,y_{j}\right).
\]

We assume that $f_{1}$ and $f_{2}$ are independent, i.e. they correspond
to posteriors computed on independent data. Then
\begin{eqnarray*}
\mathbb{E}\left\langle f_{1},f_{2}\right\rangle _{\mathcal{H}_{k}} & = & \left\langle m_{1},m_{2}\right\rangle _{\mathcal{H}_{k}}\\
 & = & \left\langle \sum\alpha_{i}r_{1}\left(\cdot,x_{i}\right),\sum\beta_{j}r_{2}\left(\cdot,y_{j}\right)\right\rangle _{\mathcal{H}_{k}}\\
 & = & \alpha^{\top}Q\beta,
\end{eqnarray*}
where
\begin{eqnarray*}
Q_{ij}=q\left(x_{i},y_{j}\right) & := & \left\langle r_{1}\left(\cdot,x_{i}\right),r_{2}\left(\cdot,y_{j}\right)\right\rangle _{\mathcal{H}_{k}}\\
 & = & \left\langle \int k\left(\cdot,z\right)k\left(z,x_{i}\right)\nu_{1}\left(dz\right),\int k\left(\cdot,z'\right)k\left(z',y_{j}\right)\nu_{2}\left(dz'\right)\right\rangle _{\mathcal{H}_{k}}\\
 & = & \int\int\left\langle k\left(\cdot,z\right),k\left(\cdot,z'\right)\right\rangle _{\mathcal{H}_{k}}k\left(z,x_{i}\right)k\left(z',y_{j}\right)\nu_{1}\left(dz\right)\nu_{2}\left(dz'\right)\\
 & = & \int\int k\left(z,z'\right)k\left(z,x_{i}\right)k\left(z',y_{j}\right)\nu_{1}\left(dz\right)\nu_{2}\left(dz'\right).
\end{eqnarray*}

The variance would be given, in analogy to the
finite dimensional case, by
\[
\text{var}\left\langle f_{1},f_{2}\right\rangle _{\mathcal{H}_{k}}=\left\langle m_{1},\Sigma_{2}m_{1}\right\rangle _{\mathcal{H}_{k}}+\left\langle m_{2},\Sigma_{1}m_{2}\right\rangle _{\mathcal{H}_{k}}+\text{tr}\left(\Sigma_{1}\Sigma_{2}\right),
\]

with $\Sigma_{1}f=\int\kappa_{1}\left(\cdot,u\right)f(u)du$ and similarly
for $\Sigma_{2}$. Thus
\begin{eqnarray*}
\left\langle m_{1},\Sigma_{2}m_{1}\right\rangle _{\mathcal{H}_{k}} & = & \left\langle \sum\alpha_{i}r_{1}\left(\cdot,x_{i}\right),\sum\alpha_{j}\int\kappa_{2}\left(\cdot,u\right)r_{1}\left(u,x_{j}\right)du\right\rangle _{\mathcal{H}_{k}}\\
 & = & \sum\sum\alpha_{i}\alpha_{j}\int\left\langle r_{1}\left(\cdot,x_{i}\right),\kappa_{2}\left(\cdot,u\right)\right\rangle _{\mathcal{H}_{k}}r_{1}\left(u,x_{j}\right)du.
\end{eqnarray*}

Now, given that kernel $\kappa_{2}$ depends on $r_{2}$ in a simple
way, it should be possible to write down the full expression similarly as for $Q_{ij}$ above. In particular 

\[
\kappa_{2}\left(\cdot,u\right)=r_{2}\left(\cdot,u\right)-r_{2}\left(\cdot,{\bf y}\right)\left(R_{2,{\bf y{\bf y}}}+\sigma_{2}^{2}I\right)^{-1}r_{2}\left({\bf y},u\right).
\]

Hence
\begin{eqnarray*}
\left\langle r_{1}\left(\cdot,x_{i}\right),\kappa_{2}\left(\cdot,u\right)\right\rangle _{\mathcal{H}_{k}} & = & q\left(x_{i},u\right)-q\left(x_{i},{\bf y}\right)\left(R_{2,{\bf y{\bf y}}}+\sigma_{2}^{2}I\right)^{-1}r_{2}\left({\bf y},u\right).
\end{eqnarray*}

However, this further requires approximating integrals of the type
\[
\int q\left(x_{i},u\right)r_{1}\left(u,x_{j}\right)du=\int\int\int\int k\left(z,z'\right)k\left(z,x_{i}\right)k\left(z',u\right)k\left(u,z''\right)k\left(z'',x_{j}\right)\nu_{1}\left(dz\right)\nu_{2}\left(dz'\right)\nu_{1}\left(dz''\right)du,
\]

etc. Thus, while possible in principle, this approach to compute the
variance is cumbersome.

\subsubsection*{A finite dimensional approximation}

To approximate the variance, hence, it is simpler to consider finite-dimensional
approximations to $f_{1}$ and $f_{2}$. Namely, collate $\left\{ x_{i}\right\} $
and $\left\{ y_{j}\right\} $ into a single set of points $\xi$ (note
that we could here take an arbitrary set of points), and consider
finite-dimensional GPs given by
\[
\tilde{f}_{1}=\sum a_{j}k\left(\cdot,\xi_{j}\right),\quad\tilde{f}_{2}=\sum b_{j}k\left(\cdot,\xi_{j}\right),
\]
where we selects distribution of $a$ and $b$ such that evaluations
of $\tilde{f}_{1}$ and $\tilde{f}_{2}$ on $\xi$, $K_{\xi\xi}a$
and $K_{\xi\xi}b$ respectively, have the same distributions as evaluations
of $f_{1}$ and $f_{2}$ on $\xi$. In particular, we take 
\[
a\sim\mathcal{N}\left(K_{\xi\xi}^{-1}m_{1}\left(\xi\right),K_{\xi\xi}^{-1}\mathcal{K}_{1,\xi\xi}K_{\xi\xi}^{-1}\right),\quad b\sim\mathcal{N}\left(K_{\xi\xi}^{-1}m_{2}\left(\xi\right),K_{\xi\xi}^{-1}\mathcal{K}_{2,\xi\xi}K_{\xi\xi}^{-1}\right),
\]
where we denoted by $m_{1}\left(\xi\right)$ a vector such that $\left[m_{1}\left(\xi\right)\right]_{i}=m_{1}\left(\xi_{i}\right)$
and by $\mathcal{K}_{1,\xi\xi}$ a matrix such that $\left[\mathcal{K}_{1,\xi\xi}\right]_{ij}=\kappa_{1}\left(\xi_{i},\xi_{j}\right)$.

Then, clearly 
\begin{eqnarray*}
\left\langle \tilde{f}_{1},\tilde{f}_{2}\right\rangle _{\mathcal{H}_{k}} & = & a^{\top}K_{\xi\xi}b\\
 & = & \left(K_{\xi\xi}^{1/2}a\right)^{\top}\left(K_{\xi\xi}^{1/2}b\right),
\end{eqnarray*}
and now we are left with the problem of computing the mean and the
variance of inner product between two independent Gaussian vectors, as given in Proposition $C.2$. We have
\begin{eqnarray*}
\mathbb{E}\left\langle \tilde{f}_{1},\tilde{f}_{2}\right\rangle _{\mathcal{H}_{k}} & = & \left(K_{\xi\xi}^{1/2}K_{\xi\xi}^{-1}m_{1}\left(\xi\right)\right)^{\top}\left(K_{\xi\xi}^{1/2}K_{\xi\xi}^{-1}m_{2}\left(\xi\right)\right)\\
 & = & m_{1}\left(\xi\right)^{\top}K_{\xi\xi}^{-1}K_{\xi\xi}K_{\xi\xi}^{-1}m_{2}\left(\xi\right)\\
 & = & m_{1}\left(\xi\right)^{\top}K_{\xi\xi}^{-1}m_{2}\left(\xi\right),
\end{eqnarray*}
and 
\begin{eqnarray*}
\text{var}\left\langle \tilde{f_{1}},\tilde{f_{2}}\right\rangle _{\mathcal{H}_{k}} & = & \left(K_{\xi\xi}^{1/2}K_{\xi\xi}^{-1}m_{1}\left(\xi\right)\right)^{\top}K_{\xi\xi}^{-1/2}\mathcal{K}_{2,\xi\xi}K_{\xi\xi}^{-1/2}\left(K_{\xi\xi}^{1/2}K_{\xi\xi}^{-1}m_{1}\left(\xi\right)\right)\\
 & + & \left(K_{\xi\xi}^{1/2}K_{\xi\xi}^{-1}m_{2}\left(\xi\right)\right)^{\top}K_{\xi\xi}^{-1/2}\mathcal{K}_{1,\xi\xi}K_{\xi\xi}^{-1/2}\left(K_{\xi\xi}^{1/2}K_{\xi\xi}^{-1}m_{2}\left(\xi\right)\right)\\
 & + & \text{tr}\left(K_{\xi\xi}^{-1/2}\mathcal{K}_{1,\xi\xi}K_{\xi\xi}^{-1/2}K_{\xi\xi}^{-1/2}\mathcal{K}_{2,\xi\xi}K_{\xi\xi}^{-1/2}\right)\\
 & = & m_{1}\left(\xi\right)^{\top}K_{\xi\xi}^{-1}\mathcal{K}_{2,\xi\xi}K_{\xi\xi}^{-1}m_{1}\left(\xi\right)\\
 & + & m_{2}\left(\xi\right)^{\top}K_{\xi\xi}^{-1}\mathcal{K}_{1,\xi\xi}K_{\xi\xi}^{-1}m_{2}\left(\xi\right)\\
 & + & \text{tr}\left(\mathcal{K}_{1,\xi\xi}K_{\xi\xi}^{-1}\mathcal{K}_{2,\xi\xi}K_{\xi\xi}^{-1}\right).
\end{eqnarray*}

\subsubsection*{Coming back to BayesIMP}
Now coming back to the derivation of BayesIMP. We will first provide two finite approximation of $f$ and $\mu_{gp}^{do}(x, \cdot)$ in the following two propositions. Recall these finite approximations are set up such that they match the distributions of evaluations of $f$ and $\mu_{gp}^{do}$ at $\hat{\bfy} = [\bfy^\top \hspace{0.1cm} \tilbfy^\top]^\top$. The latter thus act as landmark points for the finite dimensional approximations.

\begin{customprop}{C.3}[Finite dimensional approximation of $f$]
Let $\hat{\bfy} = [\bfy^\top \hspace{0.1cm} \tilbfy^\top]^\top$ be the concatenation of $\bfy$ and $
\tilbfy$. We can approximate $f$ with ,
\begin{equation}
    \tilde{f}|\bft \sim N(m_{\tilde{f}}, \Sigma_{\tilde{f}})
\end{equation}
where,
\begin{align}
    m_{\tilde{f}} &= \Phi_{\hat{\bfy}}K_{\hat{\bfy}\hat{\bfy}}^{-1}R_{\hat{\bfy}\tilbfy}(R_{\tilbfy\tilbfy} + \lambda_f I)^{-1}\bft \\
    \Sigma_{\tilde{f}} &= \Phi_{\hat{\bfy}}K_{\hat{\bfy}\hat{\bfy}}^{-1} \bar{R}_{\hat{\bfy}\hat{\bfy}} K_{\hat{\bfy}\hat{\bfy}}^{-1} \Phi_{\hat{\bfy}}^\top
\end{align}
and $\bar{R}_{\hat{\bfy}\hat{\bfy}} = R_{\hat{\bfy}\hat{\bfy}} - R_{\hat{\bfy}\tilbfy}(R_{\tilbfy\tilbfy} + \lambda_f I)^{-1} R_{\tilbfy\hat{\bfy}}$.
\end{customprop}
Similarly for $\mu_{gp}^{do}(x, \cdot)$, we have the following
\begin{customprop}{C.4}[Finite dimensional approximation of $\mu_{gp}^{do}(x, \cdot)$]\label{proposition: induced dist on mu}
Let $\hat{\bfy} = [\bfy^\top \hspace{0.1cm} \tilbfy^\top]^\top$ be the concatenation of $\bfy$ and $
\tilbfy$. We can approximate $\mu_{gp}^{do}(x, \cdot)$ with ,
\begin{equation}
    \tilde{\mu}_{gp}^{do}(x, \cdot)|\operatorname{vec}(K_{\bfy\bfy}) \sim N(m_{\tilde{\mu}}, \Sigma_{\tilde{\mu}})
\end{equation}
where,
\begin{align}
    m_{\tilde{\mu}} &= \Phi_{\hat{\bfy}}K_{\hat{\bfy}\hat{\bfy}}^{-1}R_{\hat{\bfy}\bfy}R_{\bfy\bfy}^{-1}K_{\bfy \bfy}(K_{\Omega_x} + \lambda I)^{-1}\Phi_{\Omega_x}(x) \\
    \Sigma_{\tilde{\mu}} &=     \Phi_{\hat{\bfy}}K_{\hat{\bfy}\hat{\bfy}}^{-1}K^{\mu}_{\hat{\bfy}\hat{\bfy}} K_{\hat{\bfy}\hat{\bfy}}^{-1}\Phi_{\hat{\bfy}}^\top
\end{align}
where $K^{\mu}_{\hat{\bfy}\hat{\bfy}} = \Phi_{\Omega_x}(x)^\top \Phi_{\Omega_x}(x)R_{\hat{\bfy}\hat{\bfy}} - \big(\Phi_{\Omega_x}(x)^\top(K_{\Omega_x}+\lambda I)^{-1}\Phi_{\Omega_x}(x)\big) R_{\hat{\bfy}\bfy} R_{\bfy\bfy}^{-1}R_{\bfy\hat{\bfy}}$
\end{customprop}
Now we have everything we need to derive the main algorithm in our paper, the \textsc{BayesIMP}. Note that we did not introduce the $\mu_{gp}^{do}$ notation in the main text to avoid confusion as we did not have space to properly define $\mu_{gp}^{do}$.

\begin{customprop}{\ref{proposition: prop: BayesIMP} }[\textbf{\textsc{BayesIMP}}]
Let $f$ and $\mu_{Y|do(X)}$ be \textsc{GP}s learnt as above. Denote $\tilde{f}$ and $\tilde{\mu}_{Y|do(X)}$ as the finite dimensional approximation of $f$ and $\mu_{Y|do(X)}$ respectively. Then $\tilde{g} = \langle \tilde f, \tilde \mu_{Y|do(X)} \rangle$ has the following mean and covariance:
\begin{align}
\small
    m_3(x) &= E_x K_{\bfy\hat{\bfy}}K_{\hat{\bfy}\hat{\bfy}}^{-1}R_{\hat{\bfy}\tilbfy}(R_{\tilbfy\tilbfy} + \lambda_f I)^{-1} \bft \\
    \kappa_3(x,x') &= \underbrace{E_x\Theta_1^\top \tilde{R}_{\hat{\bfy}\hat{\bfy}} \Theta_1E_{x'}^\top}_{\text{Uncertainty from } \cD_1 } + \underbrace{\Theta_2^{(a)}F_{xx'} - \Theta_2^{(b)}G_{xx'}}_{\text{Uncertainty from } \cD_2} + \underbrace{\Theta_3^{(a)}F_{xx'} - \Theta_3^{(b)}G_{xx'}}_{\text{Uncertainty from Interaction}}
\end{align}
where $E_x = \Phi_{\Omega_x}(x)^\top (K_{\Omega_x} + \lambda I)^{-1}, F_{xx'}=\Phi_{\Omega_x}(x)^\top \Phi_{\Omega_x}(x'), G_{xx'} = \Phi_{\Omega_x}(x)^\top(K_{\Omega_x} + \lambda I)^{-1}\Phi_{\Omega_x}(x')$, and $\Theta_1 = K_{\hat{\bfy}\hat{\bfy}}^{-1}R_{\hat{\bfy}\bfy}R_{\bfy\bfy}^{-1}K_{\bfy\bfy}$, $\Theta_2^{(a)} = \Theta_4^\top R_{\hat{\bfy}\hat{\bfy}}\Theta_4, \Theta_2^{(b)} =  \Theta_4^\top R_{\hat{\bfy}\bfy}R_{\bfy\bfy}^{-1}R_{\bfy\hat{\bfy}}\Theta_4$ and $\Theta_3^{(a)} = tr(K_{\hat{\bfy} \hat{\bfy}}^{-1}R_{\hat{\bfy}\hat{\bfy}}K_{\hat{\bfy}\hat{\bfy}}^{-1}\bar{R}_{\hat{\bfy}\hat{\bfy}}), \Theta_3^{(b)} = tr(R_{\hat{\bfy}\bfy}R_{\bfy\bfy}^{-1}R_{\bfy\hat{\bfy}}K_{\hat{\bfy}\hat{\bfy}}^{-1}\bar{R}_{\hat{\bfy}\hat{\bfy}}K_{\hat{\bfy}\hat{\bfy}}^{-1})$ and $\Theta_4 = K_{\hat{\bfy}\hat{\bfy}}^{-1}R_{\hat{\bfy}\tilbfy}(K_{\tilbfy\tilbfy}+\lambda_f)^{-1}\bft$. $\bar{R}_{\hat{\bfy}\hat{\bfy}}$ is the posterior covariance of $f$ evaluated at $\hat{\bfy}$ 
\end{customprop}

\begin{proof}[Proof of Proposition \ref{proposition: prop: BayesIMP}]
Since $\tilde{g}=\langle \tilde{f}, \tilde{\mu}_{gp}^{do}\rangle$ is an inner product between two finite dimensional GPs, we know the variance (as given by Proposition $C.2$) is characterised by,
\begin{equation}
    var(g) =  m_{\tilde{\mu}}^\top \Sigma_{\tilde{f}} m_{\tilde{\mu}} + m_{\tilde{f}}^\top \Sigma_{\tilde{\mu}} m_{\tilde{f}} + \operatorname{tr}(\Sigma_{\tilde{f}}\Sigma_{\tilde{\mu}})
\end{equation}
Expanding out each terms we get Proposition \ref{proposition: prop: BayesIMP}:
\begin{align}
\small
    m_{\tilmu}^\top \Sigma_{\tilf} m_{\tilmu} &= E_x\Theta_1^\top \tilde{R}_{\hat{\bfy}\hat{\bfy}} \Theta_1E_{x'}^\top \\
    m_{\tilde{f}}^\top \Sigma_{\tilde{\mu}} m_{\tilde{f}} &= \Theta_2^{(a)}F_{xx'} - \Theta_2^{(b)}G_{xx'} \\
\end{align}
while the first two terms resembles the uncertainty obtained from \textsc{IMP} and \textsc{BayesIME}, the trace term is new and we will expand it out here,
\begin{align}
    \operatorname{tr}(\Sigma_{\tilf}\Sigma_{\tilmu}) &= \operatorname{tr}\Big(\Phi_{\hat{\bfy}}K_{\hat{\bfy}\hat{\bfy}}^{-1}K^{\mu}_{\hat{\bfy}\hat{\bfy}} K_{\hat{\bfy}\hat{\bfy}}^{-1}\Phi_{\hat{\bfy}}^\top\Phi_{\hat{\bfy}}K_{\hat{\bfy}\hat{\bfy}}^{-1} \bar{R}_{\hat{\bfy}\hat{\bfy}} K_{\hat{\bfy}\hat{\bfy}}^{-1} \Phi_{\hat{\bfy}}^\top\Big) \\
    &= \operatorname{tr}\Big(K_{\hat{\bfy}\hat{\bfy}}^{-1}K_{\hat{\bfy}\hat{\bfy}}^{\mu}K_{\hat{\bfy}\hat{\bfy}}^{-1}\bar{R}_{\hat{\bfy}\hat{\bfy}} \Big) \\
    &= \operatorname{tr}\Big(K_{\hat{\bfy}\hat{\bfy}}^{-1}\big(F_{xx'}R_{\hat{\bfy}\hat{\bfy}} - G_{xx'}R_{\hat{\bfy}\bfy}R_{\bfy\bfy}^{-1}R_{\bfy\hat{\bfy}}\big)K_{\hat{\bfy}\hat{\bfy}}^{-1}\bar{R}_{\hat{\bfy}\hat{\bfy}}\Big) \\
    &= \Theta_3^{(a)} F_{xx'} - \Theta_3^{(b)} G_{xx'}
\end{align}

\end{proof}
\newpage

\section{Details on Experimental setup}
\subsection{Details on Ablation Study}
\subsubsection{Data Generating Process}
We use the following causal graphs, $X \xrightarrow[]{} Y$ and $Y \xrightarrow[]{} T$, to demonstrate a simple scenario for our data fusion setting. As linking functions, we used for $\cD_1$, $Y = x cos(\pi x) + \epsilon_1$ and for $\cD_2$, $T = 0.5* y*cos(y) + \epsilon_2$. where $\epsilon_i \sim \mathcal{N}(0, \sigma_i)$. Here below we plotted the data for illustration purposes.
\begin{figure}[!htp]
    \centering
    \includegraphics[width=\textwidth]{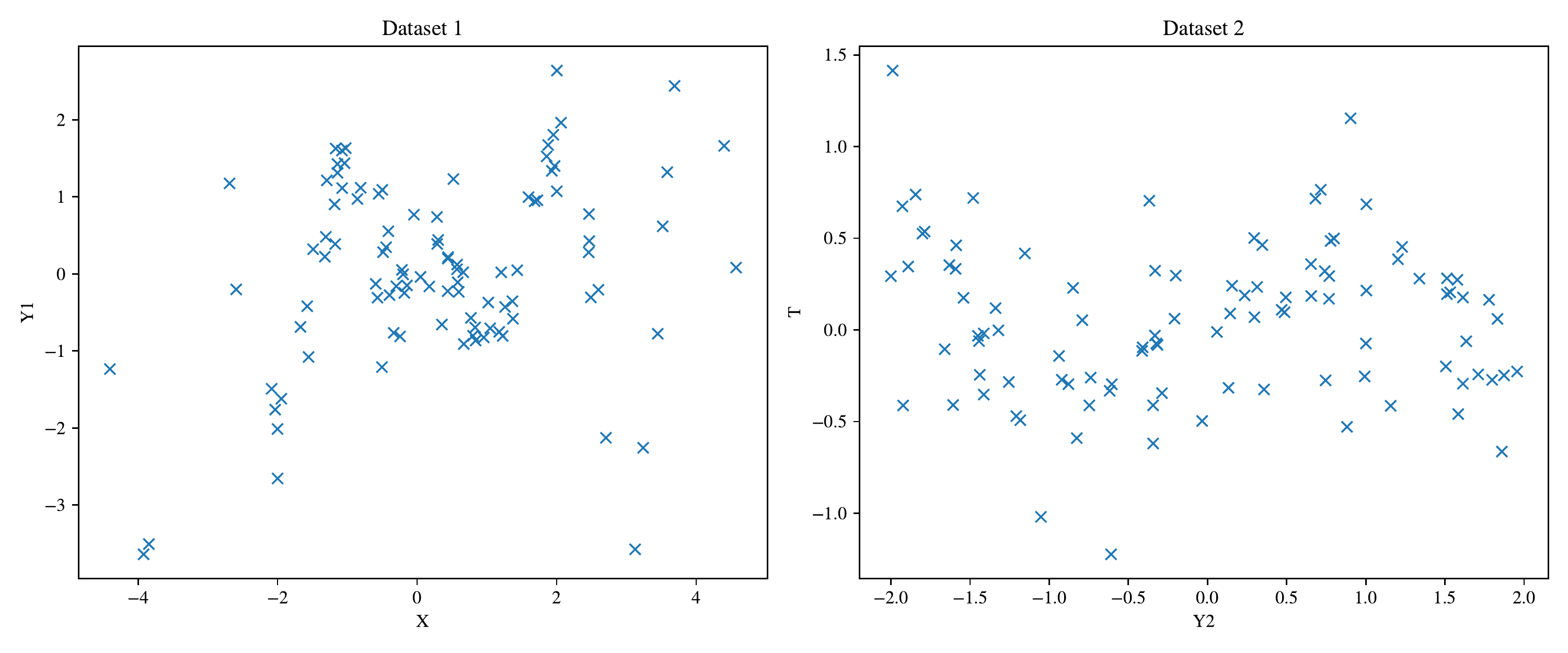}
    \caption{(Left) Illustration of $\cD_1$ (Right) Illustration of $\cD_2$}
\end{figure}
\subsubsection{Explanation on the extrapolation effect}
In the main text we referred to the case where \textsc{IMP} is better than \textsc{BayesIME} as \textbf{extrapolation effect}. We note from the figure above that in $\cD_1$ we have $x$ around $-4$ being mapped onto $y$ values around $-3$. Note however, that in $\cD_2$, we do not observe any values $\tilde{Y}$ below $-2$. Hence, because \textsc{IMP} uses a GP model for $\cD_2$ we are able to account for this mismatch in support and hence attribute more uncertainty to this region, i.e. we see the spike in uncertainty in Fig.\ref{fig: prelim} for \textsc{IMP}.
\subsubsection{Calibration Plots}
To investigate the accuracy of the uncertainty quantification in the proposed methods, we perform a (frequentist) calibration analysis of the credible intervals stemming from each method. Fig. \ref{fig:calibration} gives the calibration plots of the Sampling methods (sampling-based method of \cite{aglietti2020causal}) as well as the three proposed methods. On the x-axis is the portion of the posterior mass, corresponding to the width of the credible interval. We will interpret that as a nominal coverage probability of the true function values. On the y-axis is the true coverage probability estimated using the percentage of the times true function values do lie within the corresponding credible intervals. A perfectly calibrated method should have nominal coverage probability equal to the true coverage probability, i.e. being closer to the diagonal line is better.
\begin{figure}[!htp]
    \centering
    \includegraphics[width=0.7\textwidth]{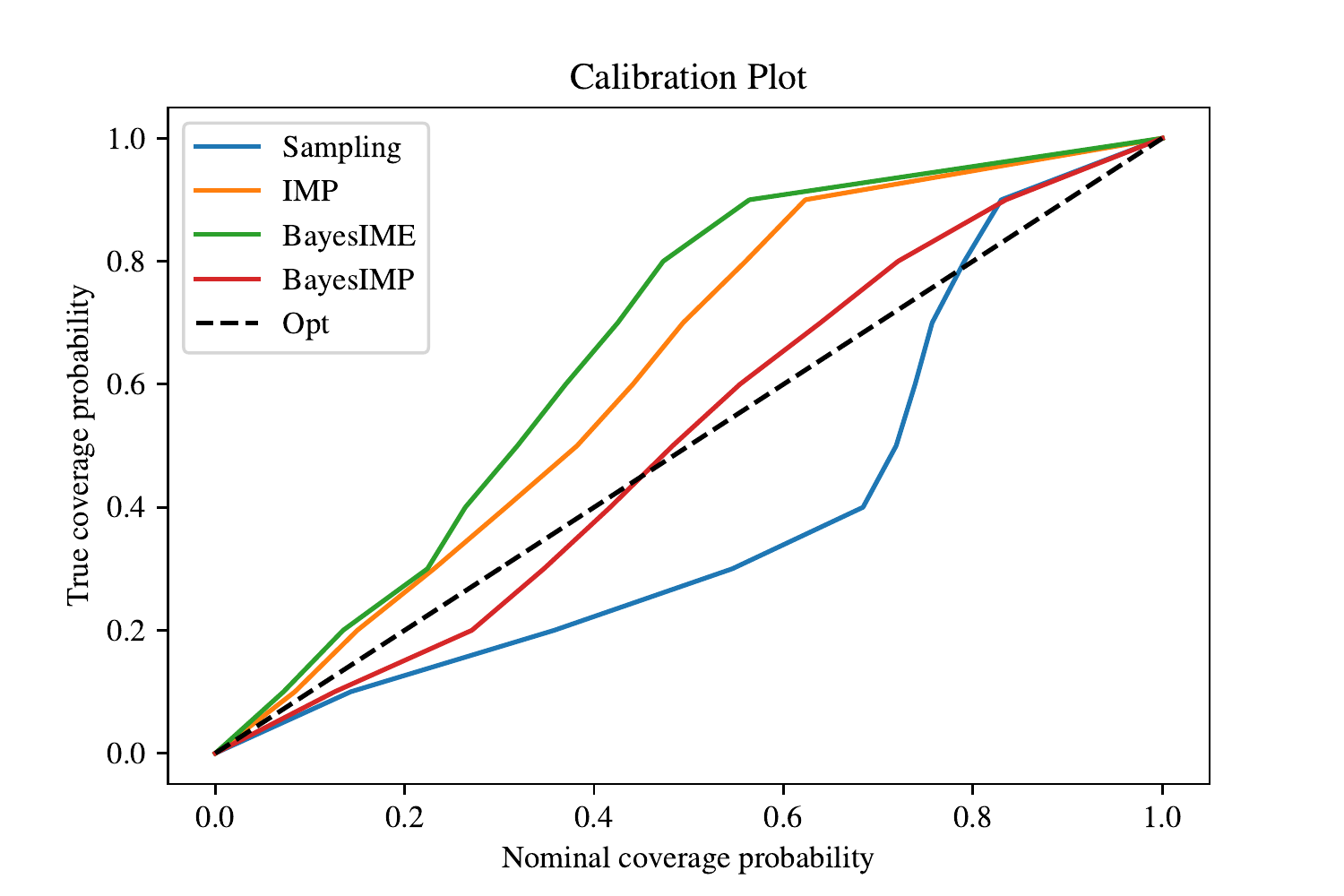}
    \caption{Calibration plots of Sampling method as well as our 3 proposed methods. We clearly see that \textsc{BayesIMP} is the best calibrated method amongst all other methods.}
    \label{fig:calibration}
\end{figure}
\subsection{Details on Synthetic Data experiments}
\subsubsection{Data Generating Process for simple synthetic dataset}
For the first simple synthetic dataset (See Fig.\ref{fig: synth1} (Top)) we used the following data generating graph is defined as.
\begin{itemize}
    \item $X \xrightarrow[]{} U: U = 2 * X + \epsilon$
    \item $Z \xrightarrow[]{} X: X = 3 * \cos(Z)+\epsilon$
    \item $\{Z, U\} \xrightarrow[]{} Y: Y = U + \exp(-Z)+\epsilon$
    \item $Y \xrightarrow[]{} T: T = \cos(Y) - \exp(-y/20)+\epsilon$
\end{itemize}
where $\epsilon \sim  \mathcal{N}(0, \sigma^2)$ and $Z \sim \mathcal{U}[-4, 4]$, where for $\cD_2$ we have that $\tilde{Y} \sim \mathcal{U}[-10, 10]$.
In addition, with probability $\pi=1/2$ we shift $U$ by $+1$ horizontally and $-3$ vertically to thus create the multimodality in the data. In order to generate from the interventional distribution, we simply remove the edge from $Z \xrightarrow[]{} X$ and fix the value of $x$.

\subsubsection{Data Generating Process for harder synthetic dataset from \cite{aglietti2020causal}}
For the first simple synthetic dataset (Fig.\ref{fig: synth1}(Bottom)) we used the same data generating format as in \cite{aglietti2020causal}.
\begin{itemize}
    \item $U_1 =\epsilon_1$
    \item $U_2 =\epsilon_2$
    \item $F =\epsilon_3$
    \item $A =F^2 + U_1 + \epsilon_A$
    \item $B = U_2 + \epsilon_B$
    \item $C = \exp(-B) + \epsilon_C$
    \item $D = \exp(-C)/10 + \epsilon_D$
    \item $E = \cos(A) + C/10 \epsilon_E$
    \item $Y_1 = \cos(D) + \sin(E) + U_1 + U_2$
    \item $Y_2 = \cos(D) + \sin(E) + U_1 + U_2 +  2\pi$
    \item $T = 6 * \sin(3 * Y) + \epsilon$
\end{itemize}
where the noise is fixed to be $\mathcal{N}(0, 1)$ and where we switch with $\pi=1/2$ from mode $Y_1$ and $Y_2$, where $\tilde{Y} \sim \mathcal{U}[-2, 9]$ for $\cD_2$.
\subsection{Details on Healthcare Data experiments}
\subsubsection{Data Generating Process}
For the healthcare dataset, $\cD_1$, (Fig.\ref{fig:medical example}) we used the same data generating format as in \cite{aglietti2020causal} with the difference that we make \emph{statin} continuous and increased the age range.
\begin{itemize}
    \item $age =\mathcal{U}[15, 75]$
    \item $bmi =\mathcal{N}(27- 0.01*age, 0.7)$
    \item $aspirin =\mathcal{\sigma}(-8.0 + 0.1*age + 0.03 *bmi)$
    \item $statin = -13 + 0.1*age + 0.2*bmi$
    \item $cancer =\mathcal{\sigma}(2.2 - 0.05 * age + 0.01 * bmi - 0.04*statin + 0.02 * aspirin)$
    \item $PSA =\mathcal{N}(6.8 + 0.04*age - 0.15*bmi-0.6*statin + 0.55*aspirin + cancer, 0.4)$
\end{itemize}
As for the second dataset, $\cD_2$ we firstly fit a GP on the data collected from \cite{stamey1989prostate}. Once we have the posterior GP, we can then use it as a generator for the $\cD_2$ as it takes as input $PSA$. This generator hence acts as a link between $\cD_1$ and $\cD_2$. This way we are able to create a simulator that allows us to obtain samples from $\EE[\emph{Cancer volume}| do(Statin)]$ for our causal BO setup.

\subsection{Bayesian Optimisation experiments with \textsc{IMP} and \textsc{BayesIME}}
\begin{figure}[!htp]
    \centering
    \includegraphics[width=0.3\textwidth]{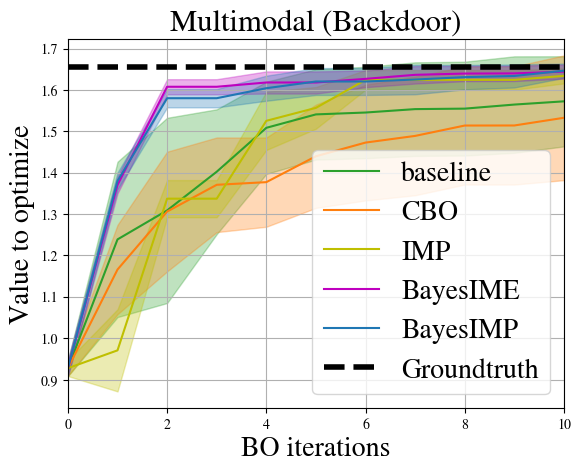}
    \includegraphics[width=0.3\textwidth]{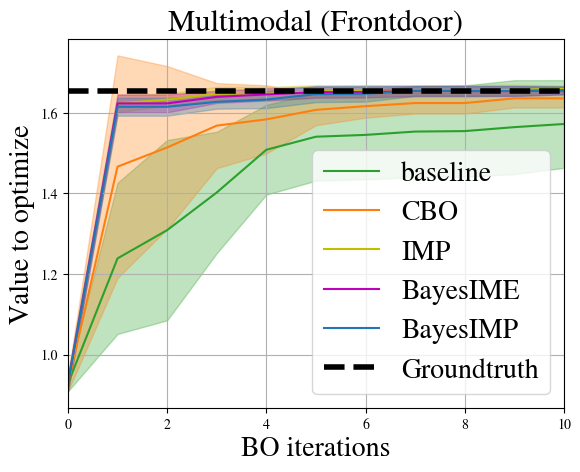}
    \includegraphics[width=0.3\textwidth]{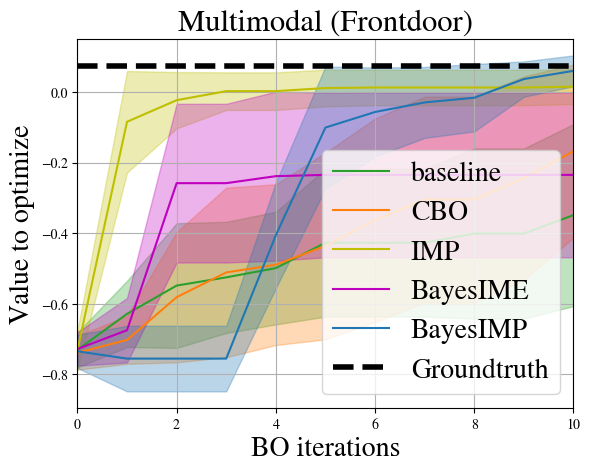}
    \caption{(Left) Simple graph using backdoor adjustment (Middle) Simple graph using front-door adjustment (Right) Harder graph using front-door adjustment. \textsc{BayesIMP} strikes the right balance between \textsc{IMP} and \textsc{BayesIME} and all three perform better than \textsc{CBO} and the GP baseline.}\label{fig: app_figs_methods}
\end{figure}

The main text compares \textsc{BayesIMP} to \textsc{CBO} and the baseline GP with no learnt prior in the Bayesian Optimisation experiments. Here, we include \textsc{IMP} and \textsc{BayesIME} (i.e. simplified versions of \textsc{BayesIMP} that account for only one source of uncertainty each) in those comparisons. We see from Fig.\ref{fig: app_figs_methods} that \textsc{BayesIMP} is comparable to \textsc{IMP} and \textsc{BayesIME} in most cases. While \textsc{BayesIMP} is not the best performing method in every scenario, it does hit a good middle ground between the first two proposed methods. For Fig.\ref{fig: app_figs_methods} (Left, Middle) we used $N=100$ and $M=50$. In the left figure, \textsc{BayesIME} and \textsc{BayesIMP} are very similar, whereas \textsc{IMP} is considerably worst. In the middle figure, all methods seems to perform well without much difference. In the right figure, we have $N=500$ and $M=50$ and this is a case where \textsc{IMP} is best, while \textsc{BayesIME} appears to get stuck in a local optimum (recall that \textsc{BayesIME} does not take into account uncertainty in $\cD_2$ where there is little data). We note that all three methods converge faster than the current SOTA \textsc{CBO}.

\end{document}